\begin{document}
\title{$\alpha$ Belief Propagation for Approximate Inference}
\author{
  \IEEEauthorblockN{
    Dong Liu, Minh Th\`{a}nh Vu, Zuxing Li, and Lars K. Rasmussen}

  \IEEEauthorblockA {KTH Royal Institute of Technology, Stockholm, Sweden. }

  \IEEEauthorblockA{
    e-mail: \{doli, mtvu, zuxing, lkra\}@kth.se}
}

\maketitle

\begin{abstract}
  Belief propagation (BP) algorithm is a widely used message-passing method for inference in graphical models. BP on loop-free graphs converges in linear time. But for graphs with loops, BP's performance is uncertain, and the understanding of its solution is limited. To gain a better understanding of BP in general graphs, we derive an interpretable belief propagation algorithm that is motivated by minimization of a localized $\alpha$-divergence. We term this algorithm as $\alpha$ belief propagation ($\alpha$-BP). It turns out that $\alpha$-BP generalizes standard BP. In addition, this work studies the convergence properties of $\alpha$-BP. We prove and offer the convergence conditions for $\alpha$-BP. Experimental simulations on random graphs validate our theoretical results. The application of $\alpha$-BP to practical problems is also demonstrated.
  
\end{abstract}

\section{Introduction}\label{sec:introduction}
Bayesian inference provides a general mathematical framework for many learning tasks such as classification, denoising, object detection, and signal detection. The wide applications include but not limited to imaging processing \cite{zhang2013denoise}, multi-input-multi-output (MIMO) signal detection in digital communication \cite{cespedes2014ep,jeon2015optimality}, inference on structured lattice \cite{10.2307/25651244}, machine learning  \cite{2018arXiv180607066M, Lin:2015:DLM:2969239.2969280, yoon2019inferenceGraph}.
Generally speaking, a core problem to these applications is the inference on statistical properties of a (hidden) variable $\bm{x} = (x_1,\dots,x_N)$ that usually can not be observed.
Specifically, practical interests usually include computing the most probable state $\bm{x}$ given joint probability $p(\bm{x})$, or marginal probability $p(\bm{x}_c)$, where $\bm{x}_c$ is subset of $\bm{x}$. Naively searching for solutions of these inference problems can be prohibitively expensive in computation.

Probabilistic graphical models defined as structured graphs provide a framework for modeling the complex statistic dependency between random variables. Graphical models are only useful when combined with efficient algorithms. Belief propagation (BP) is a fundamental message-passing algorithm used with graphical models. BP locally exchanges beliefs (statistical information) between nodes \cite{kschischang2001factor_graph, Bishop:2006:PRM:1162264}. BP can solve inference problems in linear-time exactly when graphs are loop-free or tree-structured \cite{kschischang2001factor_graph}. However, many real-world signals are naturally modeled by graph representations with loops. Although BP is still a practical method to do inference approximately (loopy BP) by running it as if there were no loop, its performance varies from case to case and is not guaranteed in general. A direct workaround method to this problem is the general belief propagation \cite{Yedidia:2000:GBP:3008751.3008848} or the junction tree algorithm \cite{wainwright2008graphical}, which try to cluster multiple nodes into super nodes to eliminate loops. The difficulty then lies in the graph modification and inference within super nodes when graphs are dense.

Apart from the practical performance issues of BP in loopy graphs, the understanding of it is also limited. \cite{Yedidia:2000:GBP:3008751.3008848} shows that BP in loopy graphs approaches to a stationary point of the Bethe free energy approximately. Based on this understanding, variants of BP are derived to improve BP. For instance, fractional BP in \cite{Wiegerinck:2002:FBP:2968618.2968673} applies a correction coefficient to each factor; generalized BP in \cite{Yedidia:2000:GBP:3008751.3008848} propagates belief between different regions of a graph; and damping BP in \cite{Pretti2005damping} updates beliefs by combining old and new beliefs. Another track falls to the variational method framework, introduced by Opper and Winther \cite{Opper:2000:GPC:1121900.1121911} and Minka \cite{Minka:2001:EPA:647235.720257, Minka:2001:FAA:935427}, namely expectation propagation (EP). In EP, a simpler factorized distribution defined in exponential distribution family is used to approximate the original complex distribution, and an intuitive factor-wise refinement procedure is used to find such an approximate distribution. The method intuitively minimizes a localized Kullback-Leibler (KL) divergence. This is discussed further in \cite{divergence-measures-and-message-passing} and  shows unifying view of message passing algorithms. The following work, stochastic EP by \cite{yingzhen2015sep}, explores EP's variant method for applications to large dataset.

Due to the fundamental role of BP for probabilistic inference and related applications, research of seeking insight of BP performance and study on its convergence have been constantly carried out.  \cite{weiss2000correctness} presents the convergence condition of BP in graphs containing a single loop. Work in \cite{heskes2004uniqueness} analyzes the Bethe free energy and offers sufficient conditions on uniqueness of BP fixed point.
{Closely related to our work, \cite{mooij2012sufficient-conditions} studies the sufficient conditions for BP convergence to a unique fixed point (as shall be seen in our paper, our convergence analysis is on a message-passing method that generalizes BP).}
\cite{nima2013stochasticBP} proposes a BP algorithm for high-dimensional discrete space and gives the convergence conditions of it. \cite{frederic2019fast} shows that BP can converge to global optima of Bethe energy when BP runs in Ising models that are ferromagnetic (neighboring nodes prefer to be aligned).

There are also works trying to give insight on variant methods of BP. Namely,
\cite{du2017convergenceBP,malioutov2006walk-sums} studies the convergence condition of Gaussian BP inference over distributed linear Gaussian models. \cite{roosta2008reweighed_sum_product} gives the convergence analysis of a reweighted BP algorithm, and offers the necessary and sufficient condition for subclasses of homogeneous graphical models (with identical potentials).

In this work, to gain better understanding of BP in general graphs, we take the path of variational methods to develop an interpretable variant of BP, which we refer to as $\alpha$-BP.
The intuition of $\alpha$-BP starts with a surrogate distribution $q(\bm{x})$. $q(\bm{x})$ is assumed to be fully factorized and each factor of $q(\bm{x})$ represents a message in the graphical model representation of a target distribution $p(\bm{x})$. We derive a message-passing rule that is induced by minimizing a localized $\alpha$-divergence. The merits of $\alpha$-BP are as follows:
\begin{itemize}
\item[a.]{$\alpha$-BP is derived intuitively as localized minimization of $\alpha$-divergence between original distribution $p$ and surrogate distribution $q$.}
\item[b.]{$\alpha$-BP generalizes the standard BP, since the message-passing rule of BP is a special case of $\alpha$-BP.}
\item[c.]{$\alpha$}-BP can outperform BP significantly in full-connected graphs while still maintaining simplicity of BP for inference. 
\end{itemize}
Apart from the algorithm itself, we give the convergence analysis of $\alpha$-BP. Sufficient conditions that guarantee the convergence of $\alpha$-BP to a unique fixed point, are studied and obtained. It turns out that the derived convergence conditions of $\alpha$-BP depend on both the graph and also the value of $\alpha$. This result suggests that proper choice of $\alpha$ can help to guarantee the convergence of $\alpha$-BP. In addition, performance improvement of $\alpha$-BP over standard BP is demonstrated in a practical application.

\section{Preliminary}\label{sec:preliminary}
We provide some preliminaries in this section that are needed in this paper. $\alpha$-divergence is introduced firstly and then a Markov random field is explained.

\subsection{Divergence Measures}
$\alpha$-divergence, introduced in \cite{Zhu95informationgeometric, divergence-measures-and-message-passing}, is a typical way to measure how different two measures characterized by densities $p$ and $q$ are. The definition of $\alpha$-divergence is as follows,
\begin{equation}\label{eq:alpha-divergence}
  \Dd_{\alpha}(p \| q ) = \frac{\int_{\bm{x}} \alpha p(\bm{x}) + (1-\alpha) q (\bm{x}) - p(\bm{x})^{\alpha} q(\bm{x})^{1-\alpha} d\bm{x}}{\alpha(1-\alpha)},
\end{equation}
where $\alpha$ is the parameter, $p$ and $q$ are not necessary to be normalized.

KL divergence as another way of characterizing difference of measures, is closely related with $\alpha$-divergence. KL divergence is defined as
\begin{equation}
  KL(p \| q) = \int p(\bm{x}) \log{\frac{p(\bm{x})}{q(\bm{x})}} d \bm{x}+ \int q(\bm{x}) - p(\bm{x}) d\bm{x},
\end{equation}
where the $\int q(\bm{x}) - p(\bm{x}) d\bm{x}$ is a correction factor to accommodate possibly unnormalized $p$ and $q$. The KL divergence can be seen as a special case of $\alpha$-divergence, by observing $\lim_{\alpha \rightarrow 1}\Dd_{\alpha}(p \| q ) = KL(p\|q)$ and $\lim_{\alpha \rightarrow 0}\Dd_{\alpha}(p \| q ) = KL(q\|p)$ (applying L'H\^opital's rule to \eqref{eq:alpha-divergence}).

Regarding basic properties of divergence measures, both $\alpha$-divergence and KL divergence are zero when $p=q$, and they are non-negative. Denote KL-projection by
\begin{equation}
  \text{proj}[p] = \uargmin{q \in \Ff} KL(p\|q),
\end{equation}
where $\Ff$ is a family of distribution $q$.
According to the stationary point equivalence Theorem in \cite{divergence-measures-and-message-passing}, $\text{proj}[p^{\alpha}q^{1- \alpha}]$ and $\Dd_{\alpha}(p\|q)$ have same stationary points (gradient is zero). A heuristic scheme to find $q^{\ast}$ minimizing $\Dd_{\alpha}(p\|q)$ starts with an initial $q$, and repeatedly updates $q$ via the projection on $\Ff$
\begin{equation}\label{eq:fixed-point-iter}
  q(\bm{x})^{\text{new}}  = \text{proj}[p(\bm{x})^{\alpha}q(\bm{x})^{1-\alpha}].
\end{equation}
This heuristic scheme is a fixed-point iteration. It does not guarantee to converge.

\subsection{Graphic Models}

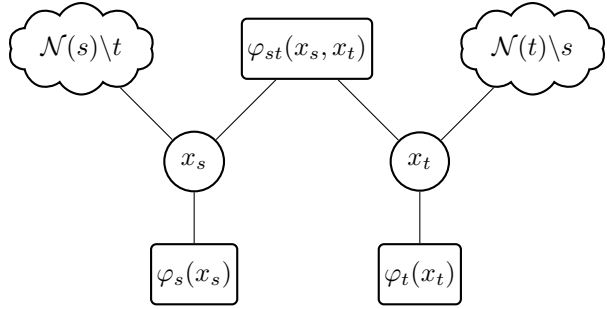
\begin{figure}
  \begin{centering}
    \begin{tikzpicture}
      \tikzstyle{enode} = [thick, draw=black, circle, inner sep = 4pt,  align=center]
      \tikzstyle{nnode} = [thick, rectangle, rounded corners = 2pt,minimum size = 0.8cm,draw,inner sep = 2pt]

      \tikzstyle{cnode} = [thick, cloud, draw,cloud puffs=10, cloud puff arc=120, aspect=2, inner ysep=4pt]

      \node[cnode] (pajk) at (3, 1.5) {$\Nn(t)\backslash s$};
      \node[cnode] (paik) at (-3, 1.5) {$\Nn(s)\backslash t$};

      \node[nnode] (tk) at (0, 1.5) {$\phi_{st}(x_s, x_t)$};
      \node[enode] (xi) at (-1.5 ,0) {$x_s$};
      \node[nnode] (fi) at (-1.5 , -1.5) {$\phi_s(x_s)$};

      \node[enode] (xj) at (1.5 ,0) {$x_t$};
      \node[nnode] (fj) at (1.5 , -1.5) {$\phi_t(x_t)$};

      \draw[-] (xi) to (fi);
      \draw[-] (xi) to (tk);
      \draw[-] (xi) to (paik);

      \draw[-] (xj) to (fj);
      \draw[-] (xj) to (tk);
      \draw[-] (xj) to (pajk);
    \end{tikzpicture}
    \vskip -0.1in
    \caption{Graphic model illustration of $p(\bm{x})$ in \eqref{eq:mrf}.}\label{fig:factor-graph}
  \end{centering}
  \vskip -0.2in
\end{figure}

A graphical model is a probabilistic model that uses a graph to illustrate the statistical dependence between random variables. An undirected graphical model, known as Markov random field (MRF), defines a family of joint probability distributions over random vector $\bm{x} := \left(  x_1, x_2, \cdots,  x_N  \right)$, where each $x_i$ takes values in a discrete finite set $\Aa$. Let us denote the undirected graph of a MRF by $\Gg:=(\Vv, \Ee)$. $\Vv:=\left[ 1 : N \right]$ is the node set associated with the index set of entries of $\bm{x}$. The graph contains undirected edges $\Ee \subset \Vv \times \Vv$, where a pair of $(s, t) \in \Ee$ if and only if nodes $v$ and $u$ are connected by an edge. In addition to the undirected edge set, let us also define the directed edge set of $\Gg$ by $\vec{\Ee}$. We have $\abs{\vec{\Ee}} = 2\abs{\Ee}$, where $\abs{\cdot}$ denotes the carnality.

The joint distribution of $\bm{x}$ can be formulated into a pairwise factorization form as
\begin{equation}\label{eq:mrf}
  p(\bm{x}) \propto \prod_{{s\in \Vv}} \phi_s(x_s) \prod_{(s,t) \in \Ee} \phi_{st}(x_s, x_t),
\end{equation}
where $\phi_{s}: \Aa \rightarrow (0, \infty)$ and $\phi_{st}: \Aa \times \Aa \rightarrow (0, \infty)$ are factor potentials. Relation $\propto$ in \eqref{eq:mrf} indicates that a normalized factor is needed to turn the right-hand side into a distribution.

The factor graph representation of \eqref{eq:mrf} is shown in
Figure~\ref{fig:factor-graph}. In the figure, $\Nn(s)$ is the set of variable nodes neighboring $x_s$ via
pairwise factors, i.e. $\Nn(s) = \left\{ t|(t,s) \in \Ee \right\}$, and $\backslash$ denotes exclusion.

\section{$\alpha$ Belief Propagation Algorithm}\label{sec:alpha-bp-factor-refine}
In this section, we detail the development of the $\alpha$-BP algorithm. We start with defining a surrogate distribution and then use the surrogate distribution to approximate a given distribution. The message passing rule of $\alpha$-BP is derived by solving the distribution approximation problem.

\subsection{Algorithm Development}\label{sec:alpha-bp-a}
We begin with defining a distribution
\begin{equation}
  q(\bm{x}) \propto \prod_{{s\in \Vv}} \tilde{\phi}_s(x_s) \prod_{(s,t) \in \Ee} \tilde{\phi}_{st}(x_s, x_t),
\end{equation}
that is similarly factorized as the joint distribution $p(\bm{x})$. The distribution $q(\bm{x})$ acts as a surrogate distribution of $p(\bm{x})$. The surrogate distribution would be used to estimate inference problems of $p(\bm{x})$. We further choose $q(\bm{x})$ such that it can be fully factorized, which means that $\tilde{\phi}_{s,t}(x_s, x_t)$ can be factorized into product of two independent functions of $x_s, x_t$ respectively. We denote this factorization as
\begin{equation}
  \tilde{\phi}_{s,t}(x_s, x_t) := m_{st}(x_t) m_{ts}(x_s).
\end{equation}
We use the notation $m_{ts}(x_s)$ to denote the factor as a function of $x_s$. $m_{ts}: \Aa \rightarrow (0, \infty)$, serves as the message along directed edge $(t \rightarrow s)$ in our algorithm. Similarly we have factor or message $m_{st}(x_t)$. Then the marginal can be formulated straightforwardly as
\begin{equation}
  q_s(x_s) \propto \tilde{\phi}_s(x_s) \prod_{w\in \Nn(s)} m_{ws}(x_s).
\end{equation}

Now, we are going to use the heuristic scheme as in \eqref{eq:fixed-point-iter} to minimize the information loss by using a fully factorized $q(\bm{x})$ to represent $p(\bm{x})$. The information loss is measured by $\alpha$-divergence $\Dd_{\alpha}(p(\bm{x}) \| q(\bm{x}))$.

We perform a factor-wise refinement procedure to update the factors of $q(\bm{x})$ such that $q(\bm{x})$ approximates $p(\bm{x})$. This approach is similar to the factor-wise refinement procedure of assumed density filtering\cite{ghosh2016assumed,opper1999bayesian} and  expectation propagation \cite{divergence-measures-and-message-passing,Minka:2001:EPA:647235.720257}. Without loss of generality, we begin to refine the factor $\tilde{\phi}_{ts}(x_t, x_s)$ via $\alpha$-divergence characterized by $\alpha$-parameter assigned with $\alpha_{ts}$. Define $q^{\backslash (t,s)}(\bm{x})$ as the product of all other factors excluding $\tilde{\phi}_{ts}(x_t, x_s)$
\begin{align}
  q^{\backslash (t,s)}(\bm{x})
  &= q(\bm{x})/\tilde{\phi}_{ts}(x_t,
    x_s) \propto \prod_{{s\in \Vv}} \tilde{\phi}_s(x_s) \prod_{(v,u) \in
    \Ee\backslash (t,s)}
    \tilde{\phi}_{vu}(x_v, x_u).
\end{align}
We also exclude the factor $\phi_{ts}(x_t, x_s)$ in $p(\bm{x})$ to obtain $p^{\backslash (t,s)}(\bm{x})$. 
Instead of updating $\tilde{\phi}_{ts}(x_t, x_s)$ directly by solving
\begin{equation}
  \!\!\!\uargmin{\tilde{\phi}_{ts}^{\mathrm{new}}(x_t, x_s)}\!\!\!\! \Dd_{\alpha_{ts}}\!\!\left(  p^{\backslash (t,s)}\!(\bm{x})\phi_{ts}(x_t, x_s)\|q^{\backslash (t,s)}\!(\bm{x})\tilde{\phi}_{ts}^{\mathrm{new}}(x_t, x_s)\right),
\end{equation}
we consider the following tractable problem
\begin{equation}\label{eq:alpha-minimize-factor}
  \!\!\!\uargmin{\tilde{\phi}_{ts}^{\mathrm{new}}(x_t, x_s)}\!\!\!\!
  \Dd_{\alpha_{ts}}\!\!\left( q^{\backslash (t,s)}\!(\bm{x}){\phi}_{ts}(x_t, x_s)
    \|q^{\backslash (t,s)}\!(\bm{x})\tilde{\phi}_{ts}^{\mathrm{new}}(x_t, x_s) \right),
\end{equation}
which searches for new factor $\tilde{\phi}_{ts}^{\mathrm{new}}(x_t, x_s)$ such $q$ can approximate $p$ better. In \eqref{eq:alpha-minimize-factor}, $\Dd_{\alpha_{ts}}(\cdot)$ denotes the $\alpha$-divergence with the correcponding parameter $\alpha_{ts}$. Note that the approximation \eqref{eq:alpha-minimize-factor} is accurate when $q^{\backslash (t,s)}(\bm{x})$ is equal to $p^{\backslash (t,s)}(\bm{x})$. 
Using fixed-point update in \eqref{eq:fixed-point-iter}, the problem in \eqref{eq:alpha-minimize-factor} is equivalent to
\begin{align}\label{eq:update-rule}
  \!\!&q^{\backslash (t,s)}(\bm{x})\tilde{\phi}_{ts}^{\mathrm{new}}(x_t, x_s) \propto 
    \mathrm{proj}\left[ q^{\backslash
        (t,s)}(\bm{x}){\phi}_{ts}(x_t, x_s)^{\alpha_{ts}} \tilde{\phi}_{ts}(x_t, x_s)^{1-\alpha_{ts}} \right]. 
\end{align}
\begin{figure}[!t]
  \begin{centering}
    \begin{tikzpicture}
      \tikzstyle{enode} = [thick, draw=black, circle, inner sep = 4pt,  align=center]
      \tikzstyle{nnode} = [thick, rectangle, rounded corners = 2pt,minimum size = 0.8cm,draw,inner sep = 2pt]
      \tikzstyle{cnode} = [thick, cloud, draw,cloud puffs=10, cloud puff arc=120, aspect=2, inner ysep=4pt]

      \node[cnode] (paik) at (-2, 0) {$\Nn(s)$};
      \node[enode] (xi) at (0 ,0) {$x_s$};
      \node[nnode] (fi) at (0 , -1.5) {$\phi_s(x_s)$};
      \node[nnode] (pi) at (2, 0) {$\hat{p}_s(x_s)$};
      \draw[-] (xi) to (fi);
      \draw[-] (xi) to (pi);
      \draw[-] (xi) to (paik);
    \end{tikzpicture}
    \vskip -0.1in
    \caption{Modified graphical model with prior factor.}\label{fig:factor-graph-with-prior}
  \end{centering}
  \vskip -0.2in
\end{figure}
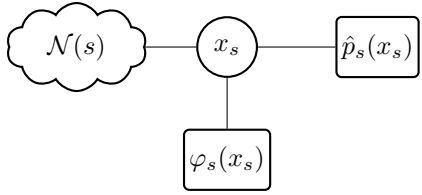

Without loss of generality, we update $m_{ts}$ and define
\begin{equation}\label{eq:new_ts}
  \tilde{\phi}_{ts}^{\mathrm{new}}(x_t, x_s) = m_{ts}^{\mathrm{new}}(x_s) m_{st}(x_t).
\end{equation}
Since KL-projection onto a fully factorized distribution reduces to matching the marginals, substituting \eqref{eq:new_ts} into \eqref{eq:update-rule}, we obtain
\begin{align}\label{eq:message-update}
  &\sum_{\bm{x}\backslash x_s} q^{\backslash (t,s)}(\bm{x}) \tilde{\phi}_{ts}^{\mathrm{new}}(x_t, x_s) \propto \sum_{\bm{x}\backslash x_s} q^{\backslash (t,s)}(\bm{x}) \phi_{ts}(x_t, x_s)^{\alpha_{ts}} \tilde{\phi}_{ts}(x_t, x_s)^{1-\alpha_{ts}}.
\end{align}
We use summation here. But it should be replaced by integral if $\Aa$ is a continuous set.
Solving \eqref{eq:message-update} gives the message passing rule as
\begin{align}\label{eq:message-rule-pairwise}
  {m}^{\text{new}}_{ts}(x_s) \propto  &m_{ts}(x_s)^{1-\alpha_{ts}} \bigg[
                                        \sum_{x_t} \phi_{ts}(x_t, x_s)^{\alpha_{ts}} {m}_{st}(x_t)^{1-\alpha_{ts}}
                                      \tilde{\phi}_t(x_t) \prod_{w\in \Nn(t)\backslash s}m_{wt}(x_t) \bigg].
\end{align}

As for the singleton factor $\tilde{\phi}_t(x_t)$, we can do the refinement procedure on $\tilde{\phi}_t(x_t)$ in the same way as we have done on $\tilde{\phi}_{ts}(x_t, x_s)$. This gives us the update rule of $\tilde{\phi}_t(x_t)$ as
\begin{equation}\label{eq:fix-factor-update}
  \tilde{\phi}_t^{\mathrm{new}}(x_t) \propto \phi_t(x_t)^{\alpha_{t}} \tilde{\phi}_t(x_t)^{1-\alpha_{t}},
\end{equation}
which is the belief from factor $\phi_t(x_t)$ to variable $x_t$. Here $\alpha_t$ is the local assignment of parameter $\alpha$ in $\alpha$-divergence in refining factor $\tilde{\phi}_t(x_t)$. Note, if
we initialize $\tilde{\phi}_t(x_t) = \phi_t(x_t)$, then it remains the
same in all iterations, which makes
\begin{align}\label{eq:message-rule}
  {m}^{\mathrm{new}}_{ts}(x_s) \propto
  & m_{ts}(x_s)^{1-\alpha_{ts}} \bigg[\sum_{x_t} \phi_{ts}(x_t, x_s)^{\alpha_{ts}} {m}_{st}(x_t)^{1-\alpha_{ts}} 
  {\phi}_t(x_t) \prod_{w\in \Nn(t)\backslash s}m_{wt}(x_t) \bigg].
\end{align}
In our notations, a factor potential is undirected, i.e. $\phi_{ts}(x_t, x_s)=\phi_{st}(x_s, x_t)$ for all $(t,s) \in \Ee$. When refining factors with $\alpha$-BP, each factor potential (corresponding to an edge of $\Gg$) can be associated with a difference setting of $\alpha$ value. In addition we also have $\alpha_{ts} = \alpha_{st}$.

\vskip -0.5in
\subsection{Remarks on $\alpha$ Belief Propagation}\label{subsec:remark}

As discussed in Section~\ref{sec:preliminary}, $KL(p\|q)$ is the special case of $\Dd_{\alpha}(p\|q)$ when $\alpha \rightarrow 1$. When restricting $\alpha_{st}=1$ for all $(s,t) \in \Ee$, the message-passing rule in \eqref{eq:message-rule} becomes
\begin{equation}\label{eq:bp-update-rule}
  {m}^{\text{new}}_{ts}(x_s) \propto 
  \sum_{x_t} \phi_{st}(x_s, x_t) {\phi}_t(x_t) \prod_{w\in \Nn(t)\backslash s}m_{wt}(x_t),
\end{equation}
which is exactly the messages of standard BP \cite{Bishop:2006:PRM:1162264}. From this point of view, we can say $\alpha$-BP is a generalization of BP.

From the practical perspective of view, $\alpha$-BP as a meta algorithm can be used with other methods in hybrid way. Inspired by \cite{pseudo_priorBP2010} and assembling methods \cite{James:2014:ISL:2517747}, we can modify the graphical model shown in Figure~\ref{fig:factor-graph} by adding an extra factor potential $\hat{p}_s(x_s)$ to each $x_s$. The extra factor potential $\hat{p}_s(x_s)$ acts as prior information that can be obtained from other methods. In other words, this factor potential stands for our belief from exterior estimation. Then we can run our $\alpha$-BP on the modified graph. The modified graph is shown in Figure~\ref{fig:factor-graph-with-prior}.



Note although mean field method also uses fully-factorized approximation, it is obtained differently from $\alpha$-BP and its factorization differs from that of $\alpha$-BP. In addition, 
$\alpha$-BP is different from standard BP with damping technique. The later case uses message update rule that differs from \eqref{eq:bp-update-rule} slightly by the way of assigning  updated message. Also, $\alpha$-BP differs from the tree-reweighted belief propagation \cite{wainwright2008graphical} by the way of message update rule and also how algorithm is derived. Please refer to the section~\ref{apdx:sec:related-msg-passings} in the supplementary for detailed discussion.

\section{Convergence of $\alpha$-BP with a Binary State Space}\label{sec:cnvg-thm}
In this section we discuss the convergence of $\alpha$-BP. We consider the case of binary $\Aa$, i.e. $\Aa=\left\{ -1, 1 \right\}$. The factor potentials are further detailed as
\begin{align}
  \phi_{st}(x_s, x_t) &= \exp\left\{ \theta_{st}(x_s, x_t)\right\}, \nonumber \\
  \phi_{s}(x_s) &= \exp\left\{ \theta_{s}(x_s) \right\}.
\end{align}
Further assume the symmetric property of potentials
\begin{align}
  \theta_{ts}(x_t, x_s) &= -\theta_{ts}(x_t, -x_s) = -\theta_{ts}(-x_t, x_s), \nonumber\\
  \theta_{s}(x_s) &= - \theta_s(-x_s).
\end{align}

For notation simplicity, we use $\theta_{ts}=\theta_{ts}(1, 1)$ and $\theta_s = \theta_s(1)$. Denote by $\bm{\alpha}$ the vector of all local assignments of parameter $\alpha$, i.e. $\bm{\alpha} = \left(  \alpha_{ts} \right)_{(t ,s) \in \Ee }$, by $\bm{\theta}$ the vector of all parameters of potentials, i.e. $\bm{\theta} = \left(  \theta_{ts} \right)_{(t ,s) \in \Ee }$.  Define a matrix $\bm{M}(\bm{\alpha}, \bm{\theta})$ of size $\abs{\vec{\Ee}} \times \abs{\vec{\Ee}}$, in which its entries are indexed by directed edges $(t\rightarrow s)$, as
\begin{align}\label{eq:matrix_m}
  M_{(t\rightarrow s), (u \rightarrow v)} 
  =
    \begin{cases}
      \abs{ 1 - \alpha_{ts}}, &\quad u=t, v=s, \\
      \abs{ 1 - \alpha_{ts}} \tanh \abs{\alpha_{ts} \theta_{ts}}, & \quad u = s, v =t, \\
      \tanh\abs{\alpha_{ts} \theta_{ts}}, & \quad u \in \Nn(t)\backslash s, v = t, \\
      0, & \quad \mathrm{otherwise}.
    \end{cases}
\end{align}
\begin{thm}\label{thm:normd}
  For an arbitrary pairwise Markov random field over binary variables,
  if the largest singular value of matrix $\bm{M}(\bm{\alpha}, \bm{\theta})$ is less than one,
  $\alpha$-BP converges to a fixed point. The associated fixed point is unique.
\end{thm}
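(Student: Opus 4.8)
The plan is to reformulate the $\alpha$-BP message-passing rule \eqref{eq:message-rule} as a fixed-point map on a suitable vector of scalar messages, and then show that this map is a contraction under the spectral condition on $\bm M(\bm\alpha,\bm\theta)$. Since the state space is binary and the potentials are symmetric, every message $m_{ts}(\cdot)$ on the two points of $\Aa$ is (up to normalization) captured by a single real parameter; a convenient choice is the log-ratio $\nu_{ts} := \tfrac12\log\frac{m_{ts}(1)}{m_{ts}(-1)}$. First I would substitute the exponential form of the potentials into \eqref{eq:message-rule}, carry out the sum over $x_t\in\{-1,1\}$, and take the log-ratio of the resulting updated message. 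Using the symmetry $\theta_{ts}(x_t,x_s)=-\theta_{ts}(-x_t,x_s)$ and the identity $\log\frac{e^{a+b}+e^{-a-b}}{e^{a-b}+e^{-a+b}} = \text{(something expressible via }\operatorname{atanh}\text{)}$, the update $\nu_{ts}^{\mathrm{new}}$ should come out as
\begin{equation*}
  \nu_{ts}^{\mathrm{new}} = (1-\alpha_{ts})\,\nu_{ts} + \operatorname{atanh}\!\Bigl[\tanh(\alpha_{ts}\theta_{ts})\,\tanh\!\bigl((1-\alpha_{ts})\nu_{st} + {\textstyle\sum_{w\in\Nn(t)\backslash s}}\nu_{wt} + \theta_t\bigr)\Bigr],
\end{equation*}
so the whole iteration is a smooth map $F:\mathbb{R}^{|\vec\Ee|}\to\mathbb{R}^{|\vec\Ee|}$, $\bm\nu\mapsto F(\bm\nu)$.

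Next I would bound the Jacobian of $F$. Differentiating the displayed update with respect to the coordinates $\nu_{ts}$, $\nu_{st}$, and $\nu_{wt}$ for $w\in\Nn(t)\backslash s$, and using the elementary bounds $|\operatorname{atanh}'(y)\cdot(\text{argument})| \le \cdots$ together with $|\tanh'|\le 1$ and $|\tanh(\alpha_{ts}\theta_{ts})|\le \tanh|\alpha_{ts}\theta_{ts}|$, I expect each partial derivative to be dominated exactly by the corresponding entry of $\bm M(\bm\alpha,\bm\theta)$: the diagonal-type term by $|1-\alpha_{ts}|$, the reverse-edge term by $|1-\alpha_{ts}|\tanh|\alpha_{ts}\theta_{ts}|$, and the neighbor terms by $\tanh|\alpha_{ts}\theta_{ts}|$. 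Hence $|\partial F(\bm\nu)| \le \bm M$ entrywise, uniformly in $\bm\nu$. A standard argument then gives, for any $p$-norm, $\|F(\bm\nu)-F(\bm\nu')\| \le \|\,|\partial F|\,\|\cdot\|\bm\nu-\bm\nu'\| \le \|\bm M\|\cdot\|\bm\nu-\bm\nu'\|$; choosing the $\ell_2$ norm makes the relevant operator norm the largest singular value of $\bm M$, which is assumed $<1$. Therefore $F$ is a contraction on the complete metric space $(\mathbb{R}^{|\vec\Ee|},\|\cdot\|_2)$, and the Banach fixed-point theorem yields a unique fixed point to which the iteration converges from any initialization; translating back from $\bm\nu$ to the messages $m_{ts}$ gives the claim.

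The main obstacle I anticipate is the Jacobian bound, specifically verifying that the $\operatorname{atanh}$-composition does not inflate any partial derivative beyond its $\bm M$-entry. The delicate point is the coefficient of the outer $\operatorname{atanh}$: one has $\operatorname{atanh}'(y) = 1/(1-y^2)$ with $y = \tanh(\alpha_{ts}\theta_{ts})\tanh(\cdot)$, and naively $1/(1-y^2)$ can be large, so the bound must exploit a cancellation — namely that the chain-rule factor from the inner $\tanh$ contributes $1-\tanh^2(\cdot)$, and the product $\tfrac{1-\tanh^2(\cdot)}{1-\tanh^2(\alpha_{ts}\theta_{ts})\tanh^2(\cdot)}$ is bounded by $1$. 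Making this step rigorous, and handling the first term $(1-\alpha_{ts})\nu_{ts}$ which passes through the outer $\operatorname{atanh}$ only via the inner argument and so also picks up the $\tanh|\alpha_{ts}\theta_{ts}|$ factor where expected, is where the real work lies; the rest is bookkeeping over directed edges and an invocation of Banach's theorem. A secondary point worth care is confirming that normalization of the messages is irrelevant — this is why working with log-ratios $\nu_{ts}$ rather than the raw messages is the right move, since the update for $\nu_{ts}^{\mathrm{new}}$ is manifestly independent of the normalization of all incoming messages.
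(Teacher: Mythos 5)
Your proposal is correct and follows essentially the same route as the paper: both reduce each message to a scalar log-ratio, show the update map's partial derivatives are dominated entrywise by the entries of $\bm{M}(\bm{\alpha},\bm{\theta})$ (your $\tanh$--$\mathrm{atanh}$ composition and its derivative cancellation are exactly the paper's auxiliary functions $H$ and $G$ with the bound $\abs{G(\mu;\kappa)}\leq\tanh(\abs{\kappa}/2)$, up to your factor-of-two rescaling of the log-ratio), and then conclude an $\ell^2$ contraction from the largest-singular-value hypothesis. The only cosmetic difference is that you invoke the Banach fixed-point theorem directly, whereas the paper argues via a Cauchy sequence of increments and then verifies existence and uniqueness of the fixed point by hand.
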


\begin{proof}
  Let us define $z_{ts}$ as the log ratio of belief from node $t$ to node $s$ on two states of $\Aa$, i.e.
  \begin{equation}\label{eq:log-ratio-uv}
    z_{ts} = \log\frac{m_{ts}(1)}{m_{ts}(-1)}.
  \end{equation}

  By combining the local message passing rule in \eqref{eq:message-rule} with \eqref{eq:log-ratio-uv}, we obtain a local update function $F_{ts}: \; \RR^{\abs{\vec{\Ee}}} \rightarrow \RR$ that maps $\bm{z} = \left( z_{ts} \right)_{(t \rightarrow s) \in \Ee}$ to updated $z_{ts}$, which can be expressed as
  \begin{equation}\label{eq:ratio-update}
    F_{ts}(\bm{z}) = (1-\alpha_{ts}) z_{ts} + f_{ts}(\bm{z}),
  \end{equation}
  where
  \begin{equation}
    f_{ts}(\bm{z}) = \log\frac{\exp\left\{ 2 \alpha_{ts} \theta_{ts} + \Delta_{ts}(\bm{z}) \right\}+1}
    {\exp\left\{ \Delta_{ts}(\bm{z}) \right\} +
      \exp\left\{ 2 \alpha_{ts} \theta_{ts} \right\}},
  \end{equation}
  with
  \begin{equation}
    \Delta_{ts}(\bm{z}) = 2 \theta_s + (1 - \alpha_{ts}) z_{st} + \sum_{w\in \Nn(u)\backslash t} z_{wt}.
  \end{equation}

  In the following, we use superscript ${(n)}$ to denotes the $n$-th iteration. Since $f_{ts}$ is continuous on $\RR^{\abs{\vec{\Ee}}}$ and differentiable, we have
  \begin{align}\label{eq:ratio_diff_at_n}
    & z_{ts}^{(n+1)} - z_{ts}^{(n)} \nonumber \\
    =& (1-\alpha_{ts}) (z_{ts}^{(n)} - z_{ts}^{(n-1)}) + f_{ts}(\bm{z}^{(n)}) - f_{ts}(\bm{z}^{(n-1)}) \nonumber \\
    \overset{(a)}{=}& (1-\alpha_{ts}) (z_{ts}^{(n)} - z_{ts}^{(n-1)}) + \nabla f_{ts}(\bm{z}^{\lambda})^{T} (\bm{z}^{(n)} - \bm{z}^{(n-1)}),
  \end{align}
  where $(a)$ follows by the mean-value theorem, $\bm{z}^{\lambda} = \lambda \bm{z}^{(n)} + (1 - \lambda) \bm{z}^{(n-1)}$ for some $\lambda \in (0,1)$, and $\nabla f_{ts}(\bm{z}^{\lambda})$ denotes the gradient of $f_{ts}$ evaluated at $\bm{z}^{\lambda}$. In further details, $\nabla f_{ts}$ is given by
  \begin{equation}\label{eq:F_pd}
    \pd{f_{ts}}{z}=
    \begin{cases}
      ( 1 - \alpha_{ts}) \pd{f_{ts}}{\Delta_{ts}}, &\quad z = z_{st}, \\
      \pd{f_{ts}}{\Delta_{ts}}, &\quad z = z_{wt}, w\in N(t)\backslash s. \\
      0, &\quad \mathrm{otherwise}.  \\
    \end{cases}
  \end{equation}
  
  Our target here is to find the condition to make sequence $\left( z_{ts}^{(n+1)} - z_{ts}^{(n)} \right)$ to converge. To this aim we need to bound the term $\nabla f_{ts}(\bm{z}^{\lambda})^{T} (\bm{z}^{(n)} - \bm{z}^{(n-1)})$ in \eqref{eq:ratio_diff_at_n}. For this purpose, we need two auxiliary functions $H, G: \RR^2 \rightarrow \RR$ from lemma $4$ in \cite{roosta2008reweighed_sum_product}, which are cited herein for completeness
  \begin{align}
    H(\mu; \kappa) &:= \log \frac{exp(\mu + \kappa) +1}{exp(\mu) + exp(\kappa)}, \nonumber \\
    G(\mu; \kappa) &:= \frac{exp(\mu + \kappa)}{exp(\mu + \kappa) + 1} - \frac{exp(\mu)}{exp(\mu) + exp(\kappa)} \nonumber \\
                   &= \frac{\sinh{\kappa}}{\cosh{\kappa} + \cosh{\mu}},
  \end{align}
  where it holds that $\pd{H(\mu; \kappa)}{\mu} = G(\mu; \kappa)$. Further, it holds that
  $\abs{G(\mu; \kappa)} \leq \abs{G(0, \kappa)} = \tanh(\abs{\kappa}/2)$.
  Then we have
  \begin{align}\label{eq:eqv_aux_functions}
    f_{ts}(z) &= H(\Delta_{ts}(\bm{z}) ;2 \alpha_{ts} \theta_{ts}), \nonumber \\
    \pd{f_{ts}}{\Delta_{ts}} &= G(\Delta_{ts}(\bm{z}) ;2 \alpha_{ts} \theta_{ts}),
  \end{align}
  which implies
  \begin{equation}\label{eq:f-bound}
    \bigg|{\pd{f_{ts}}{\Delta_{ts}}}\bigg| \leq \tanh(\alpha_{ts}\theta_{ts}).
  \end{equation}
  Combining \eqref{eq:ratio_diff_at_n}, \eqref{eq:F_pd}, \eqref{eq:eqv_aux_functions} and \eqref{eq:f-bound}, we have
  \begin{align}\label{eq:ratio_cvg_step}
    &\abs{z_{ts}^{(n+1)} - z_{ts}^{(n)}} \nonumber\\
    & = \abs{(1-\alpha_{ts}) (z_{ts}^{(n)} - z_{ts}^{(n-1)}) + \nabla f_{ts}(\bm{z}^{\lambda})^{T} (\bm{z}^{(n)} - \bm{z}^{(n-1)})} \nonumber \\
    & \leq \abs{(1-\alpha_{ts}) (z_{ts}^{(n)} - z_{ts}^{(n-1)})} + \abs{\nabla f_{ts}(\bm{z}^{\lambda})^{T} (\bm{z}^{(n)} - \bm{z}^{(n-1)})} \nonumber \\
    & = \abs{ 1 - \alpha_{ts}} \abs{z_{ts}^{(n)} - z_{ts}^{(n-1)}} + \abs{\nabla f_{ts}(\bm{z}^{\lambda})}^{T} \abs{\bm{z}^{(n)} - \bm{z}^{(n-1)}} \nonumber \\
    & \overset{(a)}{\leq} \abs{ 1 - \alpha_{ts}} \abs{z_{ts}^{(n)} - z_{ts}^{(n-1)}} \nonumber \\
    &+ \abs{ 1 - \alpha_{ts}} \tanh(\abs{\alpha_{ts} \theta_{ts}}) \abs{z_{st}^{(n)} - z_{st}^{(n-1)}} \nonumber \\
    &+ \sum_{w\in N(t)\backslash s} \tanh(\abs{\alpha_{ts} \theta_{ts}}) \abs{z_{wt}^{(n)} - z_{wt}^{(n-1)}},
  \end{align}
  where step $(a)$ holds by applying \eqref{eq:F_pd} and \eqref{eq:f-bound}.

  Concatenating all $(t\rightarrow s) \in \vec{\Ee}$ for inequality \eqref{eq:ratio_cvg_step} gives
  \begin{equation}\label{eq:ratio-update-ineq}
    \abs{\bm{z}^{(n+1)} - \bm{z}^{(n)}} \leq \bm{M}(\alpha, \bm{\theta})\abs{\bm{z}^{(n)} - \bm{z}^{(n-1)}},
  \end{equation}
  where $\bm{M}(\bm{\alpha}, \bm{\theta})$ is defined in \eqref{eq:matrix_m}, and $\leq$ in \eqref{eq:ratio-update-ineq} denotes the element-wise inequality. From \eqref{eq:ratio-update-ineq}, we could further have
  \begin{align}\label{eq:ine-normp-iteration}
    \norm{\bm{z}^{(n+1)} - \bm{z}^{n}}_{p} &\leq \norm{\bm{M}(\alpha, \bm{\theta})\abs{\bm{z}^{(n)} - \bm{z}^{(n-1)}}}_{p},               
  \end{align}
  where $1\leq p < \infty$, and $\norm{\cdot}_p$ denotes the ${\ell}^p$-norm.

  When applying $p=2$ to \eqref{eq:ine-normp-iteration}, we have
  \begin{align}\label{eq:ieq-largest-singular}
    \normd{\bm{z}^{(n+1)} - \bm{z}^{(n)}} &\leq \normd{\bm{M}(\bm{\alpha}, \bm{\theta})\abs{\bm{z}^{(n)} - \bm{z}^{(n-1)}}} \nonumber \\
                                          &\leq \lambda^{\ast}(\bm{M})\normd{\bm{z}^{(n)} - \bm{z}^{(n-1)}},
  \end{align}
  where $\lambda^{\ast}(\bm{M})$ denotes the largest singular value of matrix
  $\bm{M}(\bm{\alpha}, \bm{\theta})$. If the largest singular value of $\bm{M}$ is less than $1$, the sequence
  $\left( \abs{\bm{z}^{(n+1)} - \bm{z}^{(n)}}\right)$ converges to zero in $\ell^2$-norm as $n \rightarrow \infty$. Therefore, for $\lambda^{\ast}(\bm{M})<1$, $\ell^2$-norm $\left(  \bm{z}^{(n)}  \right)$ is a Cauchy sequence and must converge. 

  By concatenating local update function \eqref{eq:ratio-update}, we have a global update function $\bm{F} = \left(  F_{ts}  \right)_{ (t \rightarrow s) \in \vec{\Ee}}$, which defines a mapping from $\RR^{\abs{\vec{\Ee}}}$ to $\RR^{\abs{\vec{\Ee}}}$. $\bm{F}$ is a continuous function of $\bm{z}$, we have
  \begin{equation}
    \bm{F}(\lim_{n\rightarrow \infty}\bm{z}^{(n)}) = \lim_{n\rightarrow \infty}\bm{F}(\bm{z}^{(n)}).
  \end{equation}
  Assume that $\left(  \bm{z}^{(n)} \right)$ converges to
  $\bm{z}^{\ast}$. Then
  \begin{align}
    \bm{F}(\bm{z}^{\ast}) - \bm{z}^{\ast}
    &= \lim_{n\rightarrow \infty} \bm{F}(\bm{z}^{(n)}) -\lim_{n\rightarrow
      \infty} \bm{z}^{(n)} \nonumber \\
    &= \lim_{n\rightarrow \infty} (\bm{z}^{(n+1)} - \bm{z}^{(n)}) \nonumber \\
    &= 0.
  \end{align}
  Thus $\bm{z}^{\ast}$ must be a fixed point.

  In what follows we show that the fixed point is unique when $\lambda^{\ast}(\bm{M})<1$. Assume that there are two fixed points $\bm{z}_0^{\ast}$ and $\bm{z}_1^{\ast}$ for sequence $\left\{ \bm{z}^{(n)} \right\}$. Then we have
  \begin{align}\label{eq:two-fix-point}
    \bm{F}(\bm{z}_0^{\ast}) &= \bm{z}_0^{\ast}, \nonumber\\
    \bm{F}(\bm{z}_1^{\ast}) &= \bm{z}_1^{\ast}. 
  \end{align}
  Applying \eqref{eq:ieq-largest-singular} gives
  \begin{equation}\label{eq:two-fix-ineq}
    \normd{\bm{F}(\bm{z}_0^{\ast}) - \bm{F}(\bm{z}_1^{\ast})} \leq
    \lambda^{\ast}(\bm{M})\normd{\bm{z}_0^{\ast} - \bm{z}_1^{\ast}}.
  \end{equation}
  Substituting \eqref{eq:two-fix-point} into \eqref{eq:two-fix-ineq}
  gives
  \begin{equation}
    \normd{\bm{z}_0^{\ast} - \bm{z}_1^{\ast}} \leq
    \lambda^{\ast}(\bm{M})\normd{\bm{z}_0^{\ast} - \bm{z}_1^{\ast}},
  \end{equation}
  which gives us $\bm{z}_0^{\ast} = \bm{z}_1^{\ast}$ and completes the uniqueness of the fixed point.
\end{proof}

\begin{rem}
  From Theorem~\ref{thm:normd} we can see that, the sufficient condition for convergence of $\alpha$-BP is $\lambda^{\ast}(\bm{M}(\bm{\alpha}, \bm{\theta})) < 1$. It is interesting to notice that $\lambda^{\ast}(\bm{M}(\bm{\alpha}, \bm{\theta}))$ is a function of $\bm{\alpha}$ from $\alpha$-divergence and $\bm{\theta}$ from joint distribution $p(\bm{x})$. This means that whether $\alpha$-BP can converge depends on the graph $\Gg= (\Vv, \Ee)$ representing the problem $p(\bm{x})$ and also the choice of $\bm{\alpha}$. Therefore, proper choice of $\bm{\alpha}$ can guarantee the convergence of $\alpha$-BP if the sufficient condition can possibly be achieved for given $\bm{\theta}$.
\end{rem}

\section{Alternative Convergence Conditions for $\alpha$-BP}\label{sec:cvng-coro}
Given the fact that $\alpha$-BP would converge if the condition in Theorem~\ref{thm:normd} is fulfilled, the largest singular value computation for large-sized graph could be nontrivial. In this section, we give alternative sufficient conditions for the convergence of $\alpha$-BP.

\begin{cor}
  $\alpha$-BP would converge to a fixed point if the condition
  \begin{align}
    \umax{u \rightarrow v} & \abs{1-\alpha_{uv}} + \abs{1-\alpha_{vu}}\tanh{(\abs{\alpha_{vu} \theta_{vu}})} 
                           + \sum_{w \in \Nn(v) \backslash u} \tanh{(\abs{\alpha_{{vw}} \theta_{vw}})} < 1,
  \end{align}
  is fulfilled or the condition
  \begin{align}
    \umax{t \rightarrow s} &\abs{1-\alpha_{ts}} (1 + \tanh(\abs{\alpha_{ts} \theta_{ts}})) 
                           + (\abs{\Nn(t)}-1)\tanh(\abs{\alpha_{ts} \theta_{ts}}) < 1.
  \end{align}
  is achieved, where $\abs{\Nn(t)}$ denotes the carnality of the set $\Nn(t)$. The associated fixed point is unique.
\end{cor}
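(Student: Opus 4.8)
The plan is to re-run the argument of Theorem~\ref{thm:normd} but to measure the contraction with the $\ell^{\infty}$-norm (which yields the second condition) and with the $\ell^{1}$-norm (which yields the first condition), in place of the $\ell^{2}$-norm used there. The one fact I would rely on is the element-wise inequality \eqref{eq:ratio-update-ineq}, namely $\abs{\bm{z}^{(n+1)} - \bm{z}^{(n)}} \leq \bm{M}(\bm{\alpha},\bm{\theta})\,\abs{\bm{z}^{(n)} - \bm{z}^{(n-1)}}$, together with the observation that $\bm{M}(\bm{\alpha},\bm{\theta})$ is entry-wise nonnegative; this makes $\bm{M}(\bm{\alpha},\bm{\theta})\abs{\bm{z}^{(n)}-\bm{z}^{(n-1)}}$ nonnegative and element-wise above $\abs{\bm{z}^{(n+1)}-\bm{z}^{(n)}}$, so any monotone norm of the left side is bounded by the same norm of the right side, which in turn is bounded by the corresponding induced operator norm of $\bm{M}(\bm{\alpha},\bm{\theta})$ times the norm of $\bm{z}^{(n)}-\bm{z}^{(n-1)}$.

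For the second condition I would take the $\ell^{\infty}$-norm of \eqref{eq:ratio-update-ineq} and use $\norm{\bm{M}(\bm{\alpha},\bm{\theta})\,\abs{\bm{v}}}_{\infty} \leq \norm{\bm{M}(\bm{\alpha},\bm{\theta})}_{\infty}\norm{\bm{v}}_{\infty}$, where $\norm{\bm{M}(\bm{\alpha},\bm{\theta})}_{\infty}$ is the maximum absolute row sum. Reading off row $(t\rightarrow s)$ of \eqref{eq:matrix_m}, the only nonzero entries are $\abs{1-\alpha_{ts}}$ in column $(t\rightarrow s)$, $\abs{1-\alpha_{ts}}\tanh\abs{\alpha_{ts}\theta_{ts}}$ in column $(s\rightarrow t)$, and $\tanh\abs{\alpha_{ts}\theta_{ts}}$ in each of the $\abs{\Nn(t)}-1$ columns $(w\rightarrow t)$ with $w\in\Nn(t)\backslash s$; these columns are pairwise distinct, so the row sum equals $\abs{1-\alpha_{ts}}(1+\tanh\abs{\alpha_{ts}\theta_{ts}})+(\abs{\Nn(t)}-1)\tanh\abs{\alpha_{ts}\theta_{ts}}$. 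Thus the second condition says precisely $\norm{\bm{M}(\bm{\alpha},\bm{\theta})}_{\infty}<1$. Under it, $\norm{\bm{z}^{(n+1)}-\bm{z}^{(n)}}_{\infty}$ decays geometrically, so $(\bm{z}^{(n)})$ is Cauchy and converges; existence of a fixed point and its uniqueness then follow exactly as in Theorem~\ref{thm:normd}, replacing $\normd{\cdot}$ by $\norm{\cdot}_{\infty}$ in \eqref{eq:ieq-largest-singular}--\eqref{eq:two-fix-ineq} (the Lipschitz bound $\norm{\bm{F}(\bm{x})-\bm{F}(\bm{y})}_{\infty}\leq\norm{\bm{M}(\bm{\alpha},\bm{\theta})}_{\infty}\norm{\bm{x}-\bm{y}}_{\infty}$ being the same mean-value-theorem estimate as \eqref{eq:ratio_cvg_step} with generic arguments).

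For the first condition I would instead take the $\ell^{1}$-norm, so that the relevant quantity is $\norm{\bm{M}(\bm{\alpha},\bm{\theta})}_{1}$, the maximum absolute column sum. The bookkeeping is to list, for a fixed column $(u\rightarrow v)$, the rows with a nonzero entry there: row $(u\rightarrow v)$ contributes $\abs{1-\alpha_{uv}}$ (first case of \eqref{eq:matrix_m}), row $(v\rightarrow u)$ contributes $\abs{1-\alpha_{vu}}\tanh\abs{\alpha_{vu}\theta_{vu}}$ (second case), and each row $(v\rightarrow w)$ with $w\in\Nn(v)\backslash u$ contributes $\tanh\abs{\alpha_{vw}\theta_{vw}}$ (third case, with $t=v$, $s=w$, so that the constraint $u\in\Nn(t)\backslash s$ reindexes to $w\in\Nn(v)\backslash u$). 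These rows are pairwise distinct, so the column sum equals $\abs{1-\alpha_{uv}}+\abs{1-\alpha_{vu}}\tanh\abs{\alpha_{vu}\theta_{vu}}+\sum_{w\in\Nn(v)\backslash u}\tanh\abs{\alpha_{vw}\theta_{vw}}$; hence the first condition is $\norm{\bm{M}(\bm{\alpha},\bm{\theta})}_{1}<1$, and convergence to a unique fixed point follows verbatim as above with $\norm{\cdot}_{1}$ in place of $\norm{\cdot}_{\infty}$.

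I expect the only genuinely error-prone step to be the column-sum accounting for the first condition: one has to match the index set $u\in\Nn(t)\backslash s$ of the third case against a fixed column, which reindexes to the sum over $w\in\Nn(v)\backslash u$, and one has to verify that the ``reverse edge'' contribution from row $(v\rightarrow u)$ is not already counted among the rows $(v\rightarrow w)$, $w\in\Nn(v)\backslash u$ (it is not, since $u\notin\Nn(v)\backslash u$). Everything else is a transcription of the proof of Theorem~\ref{thm:normd} with a different $\ell^{p}$; one could alternatively note $\lambda^{\ast}(\bm{M}(\bm{\alpha},\bm{\theta}))\leq\sqrt{\norm{\bm{M}(\bm{\alpha},\bm{\theta})}_{1}\,\norm{\bm{M}(\bm{\alpha},\bm{\theta})}_{\infty}}$ and invoke Theorem~\ref{thm:normd}, but since the corollary assumes only one of the two bounds, the direct norm argument is the clean route.
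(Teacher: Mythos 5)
Your proposal is correct and follows essentially the same route as the paper: take the element-wise bound \eqref{eq:ratio-update-ineq} in the $\ell^1$- and $\ell^{\infty}$-norms, identify the two stated conditions as $\normu{\bm{M}}<1$ (maximum column sum) and $\normi{\bm{M}}<1$ (maximum row sum) respectively, and carry over the fixed-point existence and uniqueness argument from Theorem~\ref{thm:normd} with the corresponding norm. Your explicit row/column bookkeeping for \eqref{eq:matrix_m} matches the paper's expansion of $\normu{\bm{M}}$ and $\normi{\bm{M}}$, and your remark on the nonnegativity of $\bm{M}$ justifying the passage from the element-wise inequality to the norm inequality is a small point of care the paper leaves implicit.
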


\begin{proof}
  Setting $p=1$ to \eqref{eq:ine-normp-iteration}, we have
  \begin{equation}
    \normu{\bm{z}^{(n+1)} - \bm{z}^{n}} \leq \normu{\bm{M}(\alpha, \bm{\theta})\abs{\bm{z}^{(n)} - \bm{z}^{(n-1)}}}.
  \end{equation}
  Furthermore, from \eqref{eq:ine-normp-iteration}, we also have
  \begin{equation}
    \normi{\bm{z}^{(n+1)} - \bm{z}^{n}} \leq \normi{\bm{M}(\alpha, \bm{\theta})\abs{\bm{z}^{(n)} - \bm{z}^{(n-1)}}},
  \end{equation}
  where $\norm{\cdot}_{\infty}$ denotes the $\ell^{\infty}$-norm.
  Then we have
  \begin{align}\label{eq:row-col-norm-ineq}
    \normu{\bm{z}^{(n+1)} - \bm{z}^{n}} &\leq \normu{\bm{M}}\normu{\bm{z}^{(n)} - \bm{z}^{(n-1)}}, \nonumber \\
    \normi{\bm{z}^{(n+1)} - \bm{z}^{n}} &\leq \normi{\bm{M}}\normi{\bm{z}^{(n)} - \bm{z}^{(n-1)}},
  \end{align}
  where we omit the parameters of $\bm{M}$ here for simplicity. We can expand the first multiplicand on the right hand side of \eqref{eq:row-col-norm-ineq} as follows
  \begin{align}
    \normu{\bm{M}} =& \umax{u \rightarrow v}{\sum_{t\rightarrow s} M_{(t\rightarrow s), (u\rightarrow v)}} \nonumber \\
    =& \umax{u \rightarrow v}\abs{1-\alpha_{uv}} + \abs{1-\alpha_{vu}}\tanh{\abs{\alpha_{vu} \theta_{vu}}} 
                    + \sum_{w \Nn(v) \backslash u} \tanh{\abs{\alpha_{{vw}} \theta_{vw}}}, \nonumber \\
    \normi{\bm{M}} =& \umax{t \rightarrow s}{\sum_{u\rightarrow v} M_{(t\rightarrow s), (u\rightarrow v)}} \nonumber \\
    =& \umax{t \rightarrow s} \abs{1-\alpha_{ts}} (1 + \tanh\abs{\alpha_{ts} \theta_{ts}}) 
                    + (\abs{\Nn(t)}-1)\tanh\abs{\alpha_{ts} \theta_{ts}}.
  \end{align}
  When condition $\normu{\bm{M}} < 1$ is met, sequence $\left( \abs{\bm{z}^{(n+1)} - \bm{z}^{n}} \right)$ approaches to zero as $n\rightarrow \infty$. Similarly, condition $\normi{\bm{M}}<1$ can also guarantee the convergence to zero of sequence $\left( \abs{\bm{z}^{(n+1)} - \bm{z}^{n}} \right)$. The analysis for uniqueness of converged fixed point is similar to that in proof of Theorem~\ref{thm:normd}.
\end{proof}

\begin{figure*}[!t]
  \subfigure[]{\label{fig:largest_singular}
    \includegraphics[width=0.33\columnwidth]{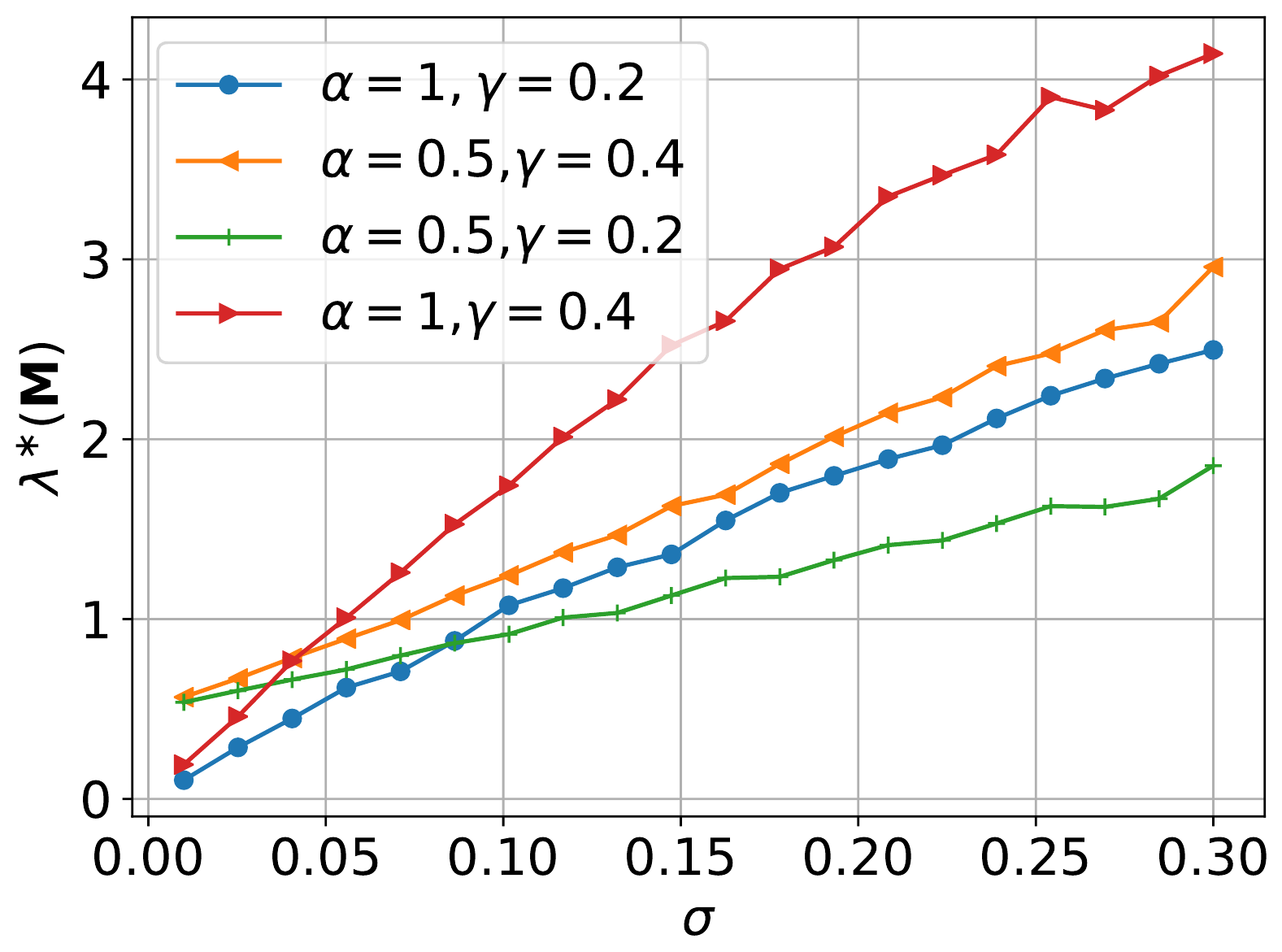}}
  \subfigure[]{\label{fig:log-error-iter-diverse}
    \includegraphics[width=0.33\columnwidth]
    {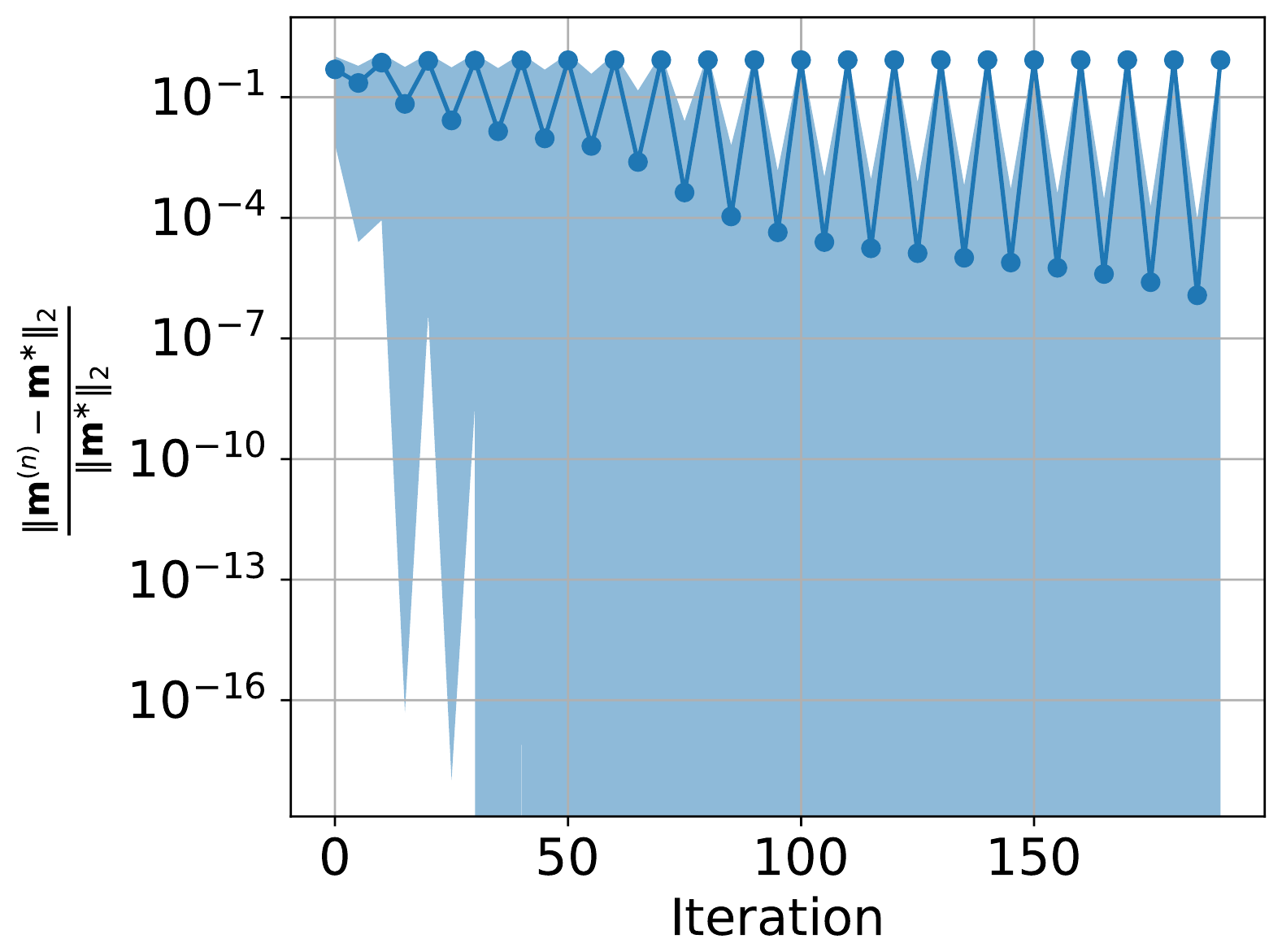}}
  \subfigure[]{\label{fig:log-error-iter-converge}
    \includegraphics[width=0.33\columnwidth]
    {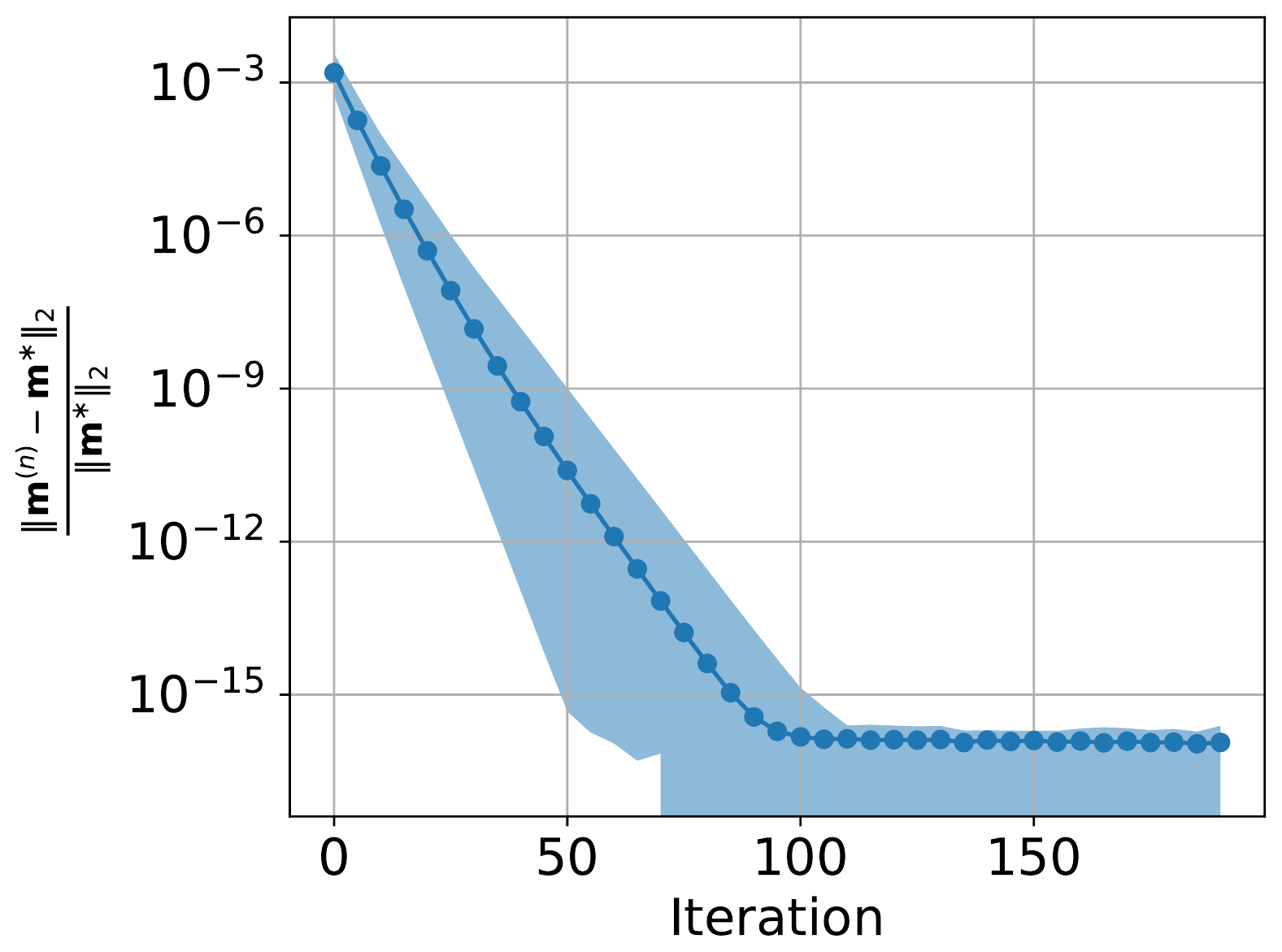}}
  
  \caption{Numerical results on convergence, number of nodes $N=16$: ({a}) the largest singular value of $\bm{M}$ defined in \eqref{eq:matrix_m} versus variance of potential parameter $\bm{\theta}$. ({b}) Parameter setting: $\gamma =
    0.4$, $\alpha = 1$ (equivalent to standard BP), $\sigma = 0.5$.
    and ({c}) Parameter setting: $\gamma =
    0.2$, $\alpha = 0.5$, $\sigma = 0.1$.
    For (b) and (c), normalized error ${\normd{\bm{m}^{(n)}-\bm{m}^{\ast}}}/{\normd{\bm{m}^{\ast}}}$
    versus the number of iterations, blue region denotes the range from minimum to maximum of the normalized error of $100$ graph realizations, whereas the curve stands for mean error of the $100$ realized graphs. }
  \label{fig:mimo_detection}
  
\end{figure*}

\section{Experiments}

In this section, we firstly give the simulations for convergence condition of $\alpha$-BP explained in Theorem~\ref{thm:normd}. Then the application of $\alpha$-BP to a MIMO detection problem is demonstrated. Code is available at https://github.com/FirstHandScientist/AlphaBP.

\subsection{Simulations on Synthetic Problem}

In this section simulations on random graphs are carried out
to gain some insights on the $\alpha$-BP. The random graphs used here are generated
by Erdos-R\'enyi (ER) model \cite{erdos1960}. In generating a graph by ER model, an edge between any two nodes is generated with probability $\gamma$, $\gamma \in (0,1)$.

Note that the MRF joint probability in \eqref{eq:mrf} can be reformulated into
\begin{equation}\label{eq:mrf_matrix_form}
  p(\bm{x}) \propto \exp\{-\bm{x}^{T}\bm{J}\bm{x} - \bm{b}^{T}\bm{x}\}, \bm{x} \in \Aa^N,
\end{equation}
with $\phi_{ts}(x_t, x_s) = e^{- 2 J_{t,s} x_t x_s}$ and $\phi_s(x_s) = e^{ - J_{s,s} - b_s  x_s}$. $J$ here is the weighted adjacency matrix. In our experiments, we generate a random graph $\Gg = (\Vv, \Ee)$ with $\gamma$ by ER model and then associate potential factors to the graph. Specifically, factor $\phi_s(x_s)$ is associated to node $x_s$, $s \in \Vv$, and $\phi_{ts}$ to edge $(t, s) \in \Ee$. $J_{ts}$ is zero if these is no edge $(t, s)$.

For this set of experiments, we set $\Aa = \left\{ -1, 1 \right\}$ and $N=16$. To specify \eqref{eq:mrf_matrix_form}, the non-zero entries of $\bm{J}$ is sampled from a Gaussian distribution $\mathsf{N}(0, \sigma^2)$, i.e. $J_{ts} \sim \mathsf{N}(0, \sigma^2)$ if $J_{ts}\neq 0$. For entries of $\bm{b}$, we use $b_t \in \mathsf{N}(0, (\sigma/4)^2)$.
For each edge $(t,s) \in \Ee$, we set $\alpha_{ts}=\alpha$, i.e. the edges share a global value $\alpha$.

In Figure~\ref{fig:largest_singular}, we show how the largest singular value of $\bm{M}(\bm{\alpha}, \bm{\theta})$ as defined in \eqref{eq:matrix_m} changes when the standard deviation $\sigma$ of potential factors increases. The behavior is illustrated with different values of $\alpha$ and the edge probability $\gamma$. For each curve, a point on the curve is the mean of $100$ realizations of random graphs as described above. The curves of Figure~\ref{fig:largest_singular} show in general that a larger standard deviation of potential factors of graph edges makes it more difficult to fulfill the convergence condition in Theorem~\ref{thm:normd}. This is also the case when a graph is denser as we raise the edge probability $\gamma$ in generating random graphs, by comparing the green and orange curves. The comparison between green and blue curves indicates that choice of $\alpha$ value in $\alpha$-BP also makes a difference, and its effect depends on the graph itself. How to tune $\alpha$ value to fulfill condition of Theorem~\ref{thm:normd} depends not only how dense ($\gamma$) the graph is, but also how potential factors spread out from each other.

To illustrate our developed convergence condition for $\alpha$-BP, we also observe how messages in a graph changes along belief propagation iterations. To be specific, we run our $\alpha$-BP with $200$ iterations on a graph, after which the messages in the graph are denoted by $\bm{m}^{\ast}$. $\bm{m}^{\ast}$ can be the converged messages if $\alpha$-BP has converged within the $200$ iterations. Then we measure the quantity $\normd{\bm{m}^{(n)} - \bm{m}^{\ast}}/\normd{\bm{m}^{\ast}}$ during the iterations. In Figure~\ref{fig:log-error-iter-diverse}, we generate $100$ random graphs by ER model with parameter setting as $\gamma =0.4$, $\alpha = 1$\footnote{$\alpha=1$ in $\alpha$-BP corresponding to standard BP.}, $\sigma = 0.5$. By referring to the curves in Figure~\ref{fig:largest_singular}, it can be inferred that this setting does not fulfill the condition in Theorem~\ref{thm:normd}. The log error changes versus iteration number $n$ for the $100$ graphs are shown in Figure~\ref{fig:log-error-iter-diverse}, in which the blue region indicates the range and the solid curve indicates the mean of the normalized errors. It is clear that Figure~\ref{fig:log-error-iter-diverse} does not show any sign of convergence within $200$ iterations.

\begin{figure*}[!t]
  \subfigure[Standalone $\alpha$-BP without any prior~]{\label{fig:mimo_a}
    \includegraphics[width=0.33\columnwidth]{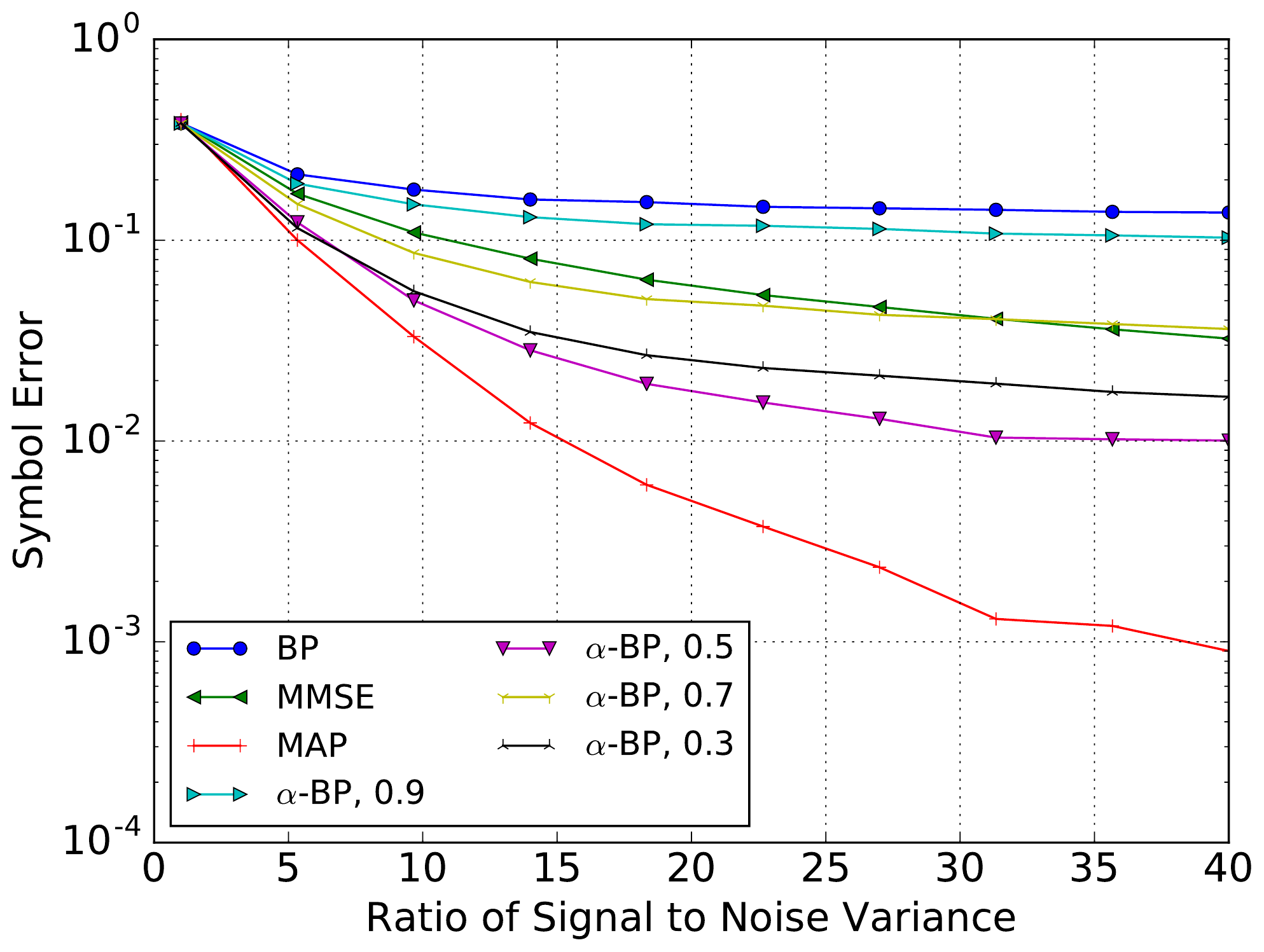}}
  \subfigure[Incorporate prior into $\alpha$-BP~]{\label{fig:mimo_b}
    \includegraphics[width=0.33\columnwidth]{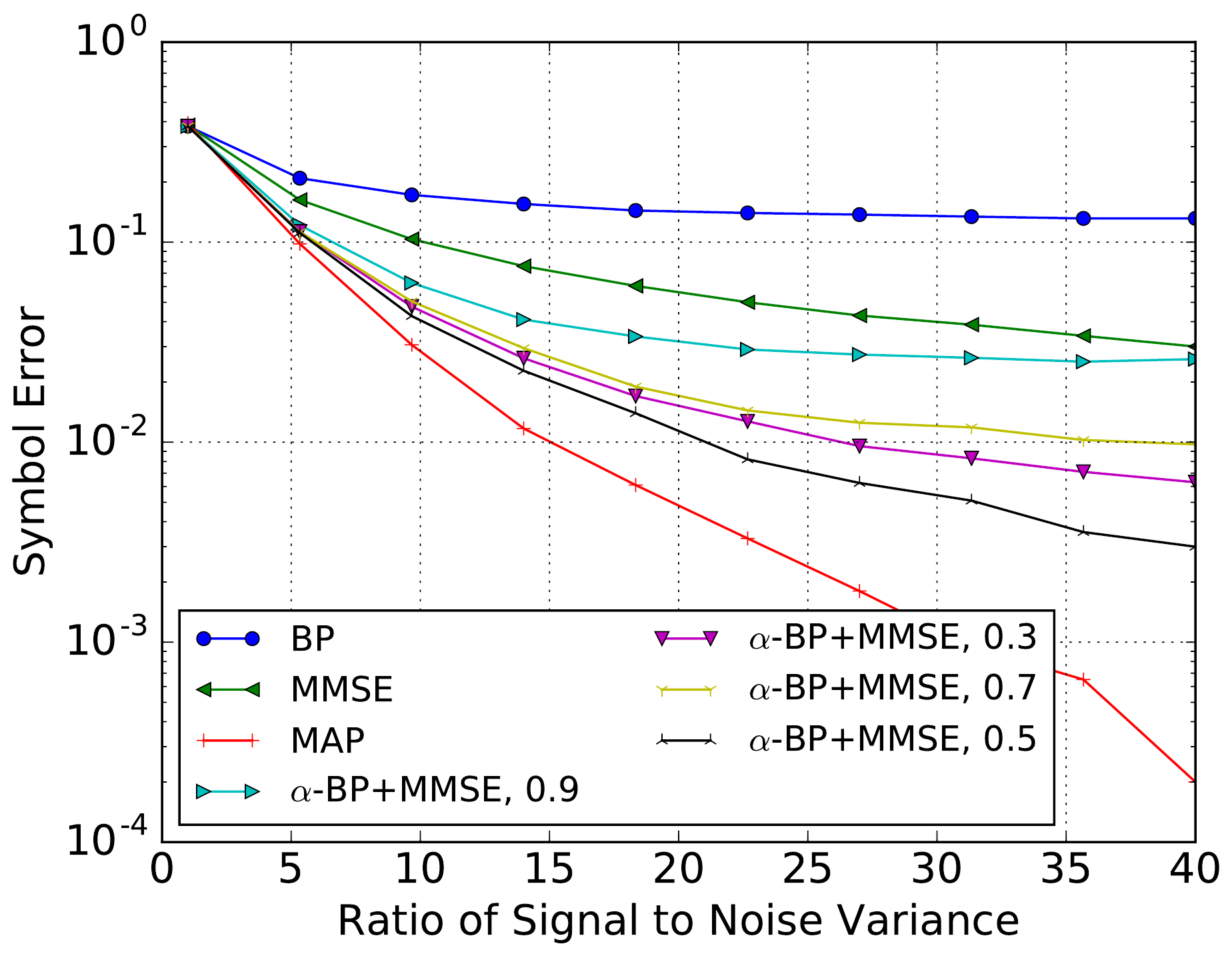}}
  \subfigure[Comparison with other message-passing methods]{\label{fig:mimo_c}
    \includegraphics[width=0.33\columnwidth]{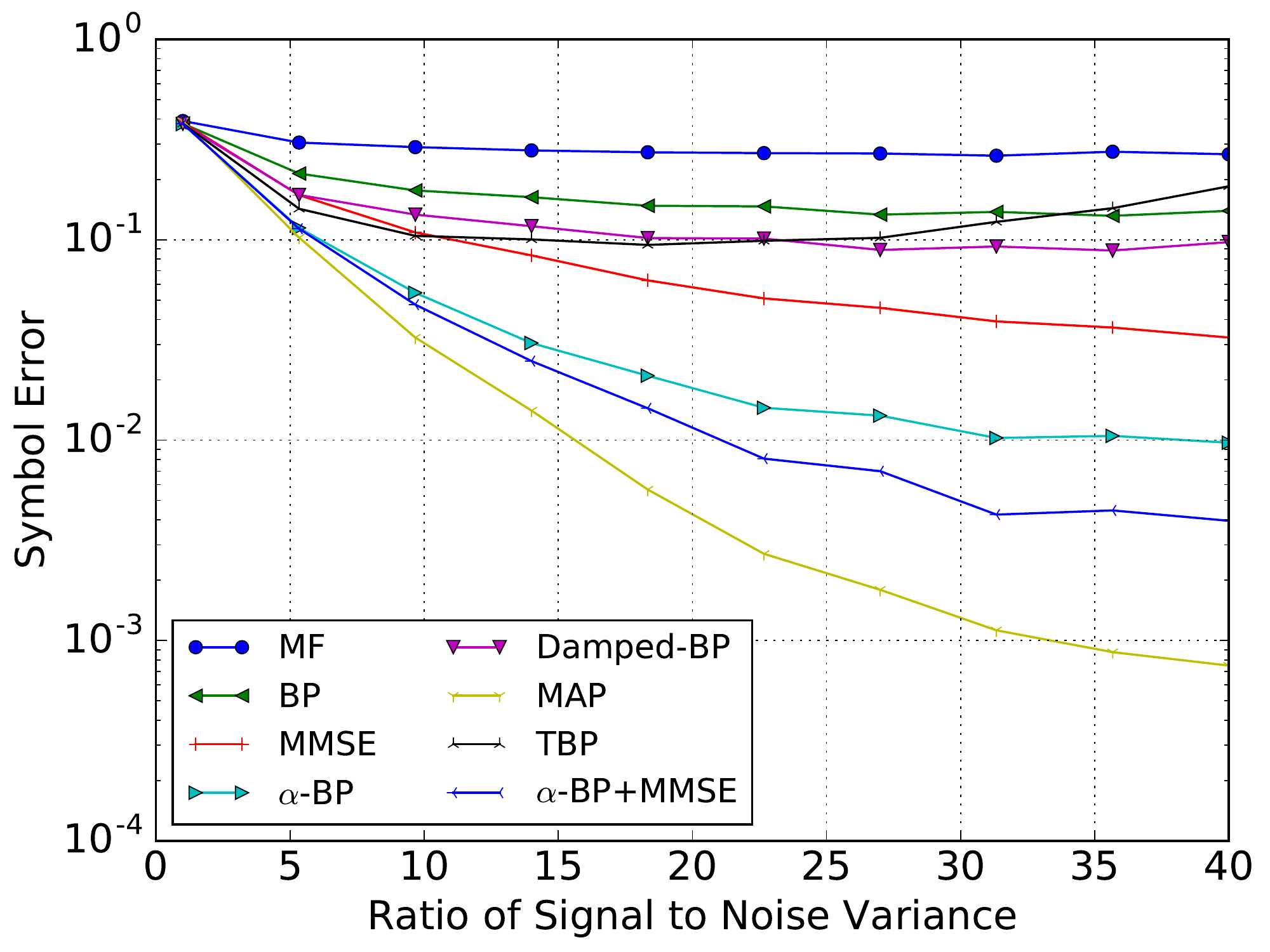}}
  
  \caption{Numerical results of $\alpha$-BP: symbol error of MIMO detection.}
  \label{fig:mimo_detection}
  
\end{figure*}

We then carry out a set of experiments in Figure~\ref{fig:log-error-iter-converge} similar to our experiments in Figure~\ref{fig:log-error-iter-diverse}. The only difference lies in the graph generating process. Here we set the parameters to be $\gamma =0.2$, $\alpha = 0.5$, $\sigma = 0.1$. According to our curves in Figure~\ref{fig:largest_singular}, a graph generated with this parameter setting should fulfill the condition in Theorem~\ref{thm:normd}. Due to randomness of both graph generating by ER and potential factors, we regenerate a graph if the initial generated graph does not satisfy $\lambda^{\ast}(\bm{M})<1$. Therefore the $100$ graphs used in experiments for Figure~\ref{fig:log-error-iter-converge} all fulfill the Theorem~\ref{thm:normd}. The result in Figure~\ref{fig:log-error-iter-converge} is consistent with our analysis on the convergence of $\alpha$-BP.

\subsection{Application to MIMO Detection}

In this section, we show the application of $\alpha$-BP to a MIMO detection problem. For a MIMO system, the observation $\bm{y}$ is a linear function of channel $\bm{H}\in \RR^{N\times N}$ when unknown signal $\bm{x}$ need to be estimated,
\begin{equation}\label{eq:linear-model}
  \bm{y} = \bm{H} \bm{x} + \bm{e}, \bm{x} \in \Aa^N,
\end{equation}
where $\bm{e}$ is  noise modeled as Gaussian noise $ \bm{e} \sim \mathsf{N}\left( \bm{0}, \sigma^2_{w} \bm{I} \right)$. Here $\bm{I}$ is unitary matrix. In this case, the posterior of $\bm{x}$ can be written as:
\begin{align}\label{eq:true-posterior}
  p(\bm{x}|\bm{y}) &\propto e^{ - \frac{1}{2\sigma_w^2} \normd{\bm{Hx} - \bm{y}}^2 } \nonumber \\
                   & = e^{ - \frac{1}{2\sigma_w^2}\left[ \bm{x}^{T}\bm{H}^{T}\bm{H}\bm{x} - 2 \bm{y}^T\bm{H}\bm{x}  + \bm{y^T}\bm{y}  \right] }.
\end{align}
Denote $\bm{S} = \bm{H}^T\bm{H}$, $\bm{h}_i$ as the $i$-th column of $\bm{H}$, and
\begin{equation}
  \hspace{-0.3cm}\phi_{i}(x_i) = e^{- \frac{S_{i,i} x_i^2}{2 \sigma_w^2} + \frac{\langle {\bm{h}_i, \bm{y}}\rangle x_i}{\sigma_w^2} },
  \phi_{ij}(x_i, x_j) = e^{ -\frac{x_i S_{i,j} x_j}{\sigma_w^2} }.
\end{equation}
Then \eqref{eq:true-posterior} is a realization of \eqref{eq:mrf}. We set $\Aa = \left\{ -1, 1 \right\}$, $N = 8$, and $\bm{H}\in \RR^{8 \times 8}$ sampled from Gaussian.

We test the application of $\alpha$-BP to the MIMO signal detection numerically. We run the $\alpha$-BP, without the prior trick (Subsection~\ref{subsec:remark}) in Figure~\ref{fig:mimo_a} and with the prior in Figure~\ref{fig:mimo_b} (legend ``$\alpha$-BP$+$MMSE'') as estimation of minimum mean square error (MMSE). The reference results of MMSE and maximum a posterior (MAP, exhausted search) are also reported under the same conditions. MMSE estimator depends on Gaussian posterior $\mathsf{N}(\hat{\bm{\mu}}, \hat{\bm{\Sigma}})$, where $\hat{\bm{\mu}} = (\bm{H}^{T}\bm{H} + \sigma_w^2 \bm{I})^{-1}\bm{H}^{T}\bm{y}$ and $\hat{\bm{\Sigma}} = (\bm{H}^{T}\bm{H} + \sigma_w^2 \bm{I})^{-1}\sigma_w$. Detection of MMSE carried out by $\argmin_{x_i\in \Aa}\abs{x_i-\hat{\mu}_i}$.

Figure~\ref{fig:mimo_a} shows that BP even underperforms MMSE but $\alpha$-BP can outperform MMSE by assigning smaller value of $\alpha$.
Note that MMSE requires the matrix inverse computation whose complexity is proportional to $N^3$, while the complexity of $\alpha$-BP increases linearly with $N$. Therefore $\alpha$-BP is superior to MMSE both performance-wise and complexity-wise.  
However, there is still a big gap between $\alpha$-BP (even for $\alpha=0.5$) and MAP. This gap can be decreased further by using the prior trick discussed in Subsection~\ref{subsec:remark}. Figure~\ref{fig:mimo_b} exemplifies this effects by using prior belief from MMSE, $\hat{p}_i(x_i)\propto \exp\{-(x_i-\hat{\mu}_i)^2/(2\hat{\bm{\Sigma}}_{i,i})\}$, by modifying the graph as shown in Figure~\ref{fig:factor-graph-with-prior}, which comes with legend "$\alpha$-BP$+$MMSE". It is shown that larger performance gain is observed when $\alpha$-BP runs with prior belief.

Additional, we also carry out the experiments where the proposed $\alpha$-BP is compared with mean field (legend 'MF'), BP with damping technique \cite{Pretti2005damping} (with legend 'Damped-BP'), and Tree-reweighted belief propagation \cite{wainwright2008graphical} (with legend 'TBP') in Figure~\ref{fig:mimo_c}. As expected, mean field method performs no better than BP. Damping technique improves BP's performance with a noticeable difference but still falls behind MMSE. The performance of tree-reweighted BP reaches that of MMSE in low ratio range of signal-to-noise variance but degenerates a lot in the high ratio range. The old message and potential factors are reweighted by the edge appearance probability in TBP to compute new messages. In TBP, the edge appearance probability is the probability that the edge exists in a randomly chosen spanning tree from all possible spanning trees of graph $\Gg$, which is usually expensive to compute.

\section{Conclusion}
In this paper, we have developed and analyzed a new alternative to standard BP message passing algorithm. The developed $\alpha$-BP algorithm has the clear intuition of minimization of a localized $\alpha$-divergence. Convergence conditions of $\alpha$-BP are offered in binary state space of each random variable. $\alpha$-BP is a valid and practical algorithm accoinding to our experiments. With prior trick, the performance of $\alpha$-BP can be further improved. Future works would be to investigate more general conditions for convergence of $\alpha$-BP.

\bibliographystyle{plain}
\bibliography{myref}

\begin{thebibliography}{10}

\bibitem{Bishop:2006:PRM:1162264}
Christopher~M. Bishop.
\newblock {\em Pattern Recognition and Machine Learning (Information Science
  and Statistics)}.
\newblock Springer-Verlag, Berlin, Heidelberg, 2006.

\bibitem{cespedes2014ep}
J.~{Céspedes}, P.~M. {Olmos}, M.~{Sánchez-Fernández}, and F.~{Perez-Cruz}.
\newblock Expectation propagation detection for high-order high-dimensional
  mimo systems.
\newblock {\em IEEE Transactions on Communications}, 62(8):2840--2849, Aug
  2014.

\bibitem{du2017convergenceBP}
Jian Du, Shaodan Ma, Yik-Chung Wu, Soummya Kar, and Jos{\'e} M.~F. Moura.
\newblock Convergence analysis of distributed inference with vector-valued
  gaussian belief propagation.
\newblock {\em J. Mach. Learn. Res.}, 18(1):6302--6339, January 2017.

\bibitem{erdos1960}
Paul Erdos and Alfr{\'e}d R{\'e}nyi.
\newblock On the evolution of random graphs.
\newblock {\em Publ. Math. Inst. Hung. Acad. Sci}, 5(1):17--60, 1960.

\bibitem{10.2307/25651244}
N.~Friel, A.~N. Pettitt, R.~Reeves, and E.~Wit.
\newblock Bayesian inference in hidden markov random fields for binary data
  defined on large lattices.
\newblock {\em Journal of Computational and Graphical Statistics},
  18(2):243--261, 2009.

\bibitem{ghosh2016assumed}
Soumya Ghosh, Francesco~Delle Fave, and Jonathan Yedidia.
\newblock Assumed density filtering methods for learning bayesian neural
  networks, 2016.

\bibitem{pseudo_priorBP2010}
J.~{Goldberger} and A.~{Leshem}.
\newblock Pseudo prior belief propagation for densely connected discrete
  graphs.
\newblock In {\em 2010 IEEE Information Theory Workshop on Information Theory
  (ITW 2010, Cairo)}, pages 1--5, Jan 2010.

\bibitem{heskes2004uniqueness}
Tom Heskes.
\newblock On the uniqueness of loopy belief propagation fixed points.
\newblock {\em Neural Comput.}, 16(11):2379–2413, November 2004.

\bibitem{James:2014:ISL:2517747}
Gareth James, Daniela Witten, Trevor Hastie, and Robert Tibshirani.
\newblock {\em An Introduction to Statistical Learning: With Applications in
  R}.
\newblock Springer Publishing Company, Incorporated, 2014.

\bibitem{jeon2015optimality}
C.~{Jeon}, R.~{Ghods}, A.~{Maleki}, and C.~{Studer}.
\newblock Optimality of large mimo detection via approximate message passing.
\newblock In {\em 2015 IEEE International Symposium on Information Theory
  (ISIT)}, pages 1227--1231, June 2015.

\bibitem{frederic2019fast}
Frederic Koehler.
\newblock Fast convergence of belief propagation to global optima: Beyond
  correlation decay.
\newblock In {\em Advances in Neural Information Processing Systems 32}, pages
  8329--8339. Curran Associates, Inc., 2019.

\bibitem{kschischang2001factor_graph}
F.~R. {Kschischang}, B.~J. {Frey}, and H.~. {Loeliger}.
\newblock Factor graphs and the sum-product algorithm.
\newblock {\em IEEE Transactions on Information Theory}, 47(2):498--519, Feb
  2001.

\bibitem{yingzhen2015sep}
Yingzhen Li, Jos\'{e}~Miguel Hern\'{a}ndez-Lobato, and Richard~E Turner.
\newblock Stochastic expectation propagation.
\newblock In C.~Cortes, N.~D. Lawrence, D.~D. Lee, M.~Sugiyama, and R.~Garnett,
  editors, {\em Advances in Neural Information Processing Systems 28}, pages
  2323--2331. Curran Associates, Inc., 2015.

\bibitem{Lin:2015:DLM:2969239.2969280}
Guosheng Lin, Chunhua Shen, Ian Reid, and Anton van~den Hengel.
\newblock Deeply learning the messages in message passing inference.
\newblock In {\em Proceedings of the 28th International Conference on Neural
  Information Processing Systems - Volume 1}, NIPS'15, pages 361--369,
  Cambridge, MA, USA, 2015. MIT Press.

\bibitem{malioutov2006walk-sums}
Dmitry~M. Malioutov, Jason~K. Johnson, and Alan~S. Willsky.
\newblock Walk-sums and belief propagation in gaussian graphical models.
\newblock {\em J. Mach. Learn. Res.}, 7:2031–2064, December 2006.

\bibitem{Minka:2001:EPA:647235.720257}
Thomas~P. Minka.
\newblock Expectation propagation for approximate bayesian inference.
\newblock In {\em Proceedings of the 17th Conference in Uncertainty in
  Artificial Intelligence}, UAI '01, pages 362--369, San Francisco, CA, USA,
  2001. Morgan Kaufmann Publishers Inc.

\bibitem{Minka:2001:FAA:935427}
Thomas~P. Minka.
\newblock {\em A Family of Algorithms for Approximate Bayesian Inference}.
\newblock PhD thesis, Cambridge, MA, USA, 2001.
\newblock AAI0803033.

\bibitem{divergence-measures-and-message-passing}
Tom Minka.
\newblock Divergence measures and message passing.
\newblock Technical Report MSR-TR-2005-173, January 2005.

\bibitem{2018arXiv180607066M}
G.~{Montufar}.
\newblock {Restricted Boltzmann Machines: Introduction and Review}.
\newblock {\em ArXiv e-prints}, June 2018.

\bibitem{mooij2012sufficient-conditions}
Joris~M. Mooij and Hilbert~J. Kappen.
\newblock Sufficient conditions for convergence of loopy belief propagation.
\newblock {\em CoRR}, abs/1207.1405, 2012.

\bibitem{nima2013stochasticBP}
N.~{Noorshams} and M.~J. {Wainwright}.
\newblock Stochastic belief propagation: A low-complexity alternative to the
  sum-product algorithm.
\newblock {\em IEEE Transactions on Information Theory}, 59(4):1981--2000,
  April 2013.

\bibitem{opper1999bayesian}
Manfred Opper.
\newblock {\em A Bayesian Approach to On-Line Learning}, page 363–378.
\newblock Cambridge University Press, USA, 1999.

\bibitem{Opper:2000:GPC:1121900.1121911}
Manfred Opper and Ole Winther.
\newblock Gaussian processes for classification: Mean-field algorithms.
\newblock {\em Neural Comput.}, 12(11):2655--2684, November 2000.

\bibitem{Pretti2005damping}
Marco Pretti.
\newblock A message-passing algorithm with damping.
\newblock {\em Journal of Statistical Mechanics: Theory and Experiment},
  2005(11):P11008--P11008, nov 2005.

\bibitem{roosta2008reweighed_sum_product}
T.~G. {Roosta}, M.~J. {Wainwright}, and S.~S. {Sastry}.
\newblock Convergence analysis of reweighted sum-product algorithms.
\newblock {\em IEEE Transactions on Signal Processing}, 56(9):4293--4305, Sep.
  2008.

\bibitem{wainwright2008graphical}
M.~J. {Wainwright} and M.~I. {Jordan}.
\newblock {\em Graphical Models, Exponential Families, and Variational
  Inference}.
\newblock now, 2008.

\bibitem{wainwright2002tree}
Martin Wainwright, Tommi Jaakkola, and Alan Willsky.
\newblock Tree-reweighted belief propagation algorithms and approximate ml
  estimation by pseudo-moment matching.
\newblock 21, 12 2002.

\bibitem{weiss2000correctness}
Yair Weiss.
\newblock Correctness of local probability propagation in graphical models with
  loops.
\newblock {\em Neural Comput.}, 12(1):1–41, January 2000.

\bibitem{Wiegerinck:2002:FBP:2968618.2968673}
Wim Wiegerinck and Tom Heskes.
\newblock Fractional belief propagation.
\newblock In {\em Proceedings of the 15th International Conference on Neural
  Information Processing Systems}, NIPS'02, pages 438--445, Cambridge, MA, USA,
  2002. MIT Press.

\bibitem{yedidia2005constructing}
J.~S. {Yedidia}, W.~T. {Freeman}, and Y.~{Weiss}.
\newblock Constructing free-energy approximations and generalized belief
  propagation algorithms.
\newblock {\em IEEE Transactions on Information Theory}, 51(7):2282--2312, July
  2005.

\bibitem{Yedidia:2000:GBP:3008751.3008848}
Jonathan~S. Yedidia, William~T. Freeman, and Yair Weiss.
\newblock Generalized belief propagation.
\newblock In {\em Proceedings of the 13th International Conference on Neural
  Information Processing Systems}, NIPS'00, pages 668--674, Cambridge, MA, USA,
  2000. MIT Press.

\bibitem{yedida2003understanding}
Jonathan~S. Yedidia, William~T. Freeman, and Yair Weiss.
\newblock {\em Understanding Belief Propagation and Its Generalizations}, page
  239–269.
\newblock Morgan Kaufmann Publishers Inc., San Francisco, CA, USA, 2003.

\bibitem{yoon2019inferenceGraph}
KiJung Yoon, Renjie Liao, Yuwen Xiong, Lisa Zhang, Ethan Fetaya, Raquel
  Urtasun, Richard~S. Zemel, and Xaq Pitkow.
\newblock Inference in probabilistic graphical models by graph neural networks.
\newblock {\em CoRR}, abs/1803.07710, 2018.

\bibitem{zhang2013denoise}
R.~{Zhang}, C.~A. {Bouman}, J.~{Thibault}, and K.~D. {Sauer}.
\newblock Gaussian mixture markov random field for image denoising and
  reconstruction.
\newblock In {\em 2013 IEEE Global Conference on Signal and Information
  Processing}, pages 1089--1092, Dec 2013.

\bibitem{Zhu95informationgeometric}
Huaiyu Zhu and Richard Rohwer.
\newblock Information geometric measurements of generalisation.
\newblock Technical report, 1995.

\end{thebibliography}
\appendix
\section{Related Methods}
\label{apdx:sec:related-msg-passings}
\subsection{Mean Field}
Although mean field method is developed also based on the assumption of fully factorized approximattion, it is derived from a different way against $\alpha$-BP. As explained in \cite{yedida2003understanding}, the mean field method can be obtained from minimizing a Gibbs free energy
\begin{equation}\label{apdx:eq:mf-energy}
  F_{MF}(q_{MF}(\bm{x})) = \sum_{\bm{x}} q_{MF}(\bm{x}) \log{\frac{q_{MF}(\bm{x})}{\tilde{p}(\bm{x})}},
\end{equation}
with $q_{MF}(\bm{x})$ defined as
\begin{equation}
  q_{MF}(\bm{x}) = \prod_{s\in \Vv} q_s(x_s).
\end{equation}
Then \eqref{apdx:eq:mf-energy} can be rewritten as
\begin{align}
  F_{MF}(q_{MF}(\bm{x})) = &\sum_{s\in\Vv}\sum_{x_s} q_s(x_s) \ln{q_s(x_s)} \nonumber \\
  &- \sum_{s\in\Vv}\sum_{x_s} q_s(x_s) \ln{\phi_s(x_s)} \nonumber \\
  &- \sum_{(s,t)\in\Ee}\sum_{x_s} q_s(x_s) q_t(x_t) \ln{\phi_{st}(x_s,x_t)}.
\end{align}
The mean field method can be obtained from solving $\min_{q_s(x_s)}F_{MF}(q_{MF}(\bm{x}))$ for all $\left\{ q_s(x_s) \right\}$ alternatively.

\subsection{BP}
As is known, standard belief propagation has the intuition as the minimization of Bethe free energy \cite{yedidia2005constructing}.
In Bethe approximation, pairwise-node joint approximation $\left\{ q_{st}(x_s, x_t) \right\}$ is considered apart from the single-node approximation $\{q_s(x_s)\}$. The Bethe approximation takes the form of
\begin{equation}
  q_B(\bm{x}) = \frac{\prod_{(s,t)\in \Ee}q_{st}(x_s, x_t)}{\prod_{s\in\Vv}q_s(x_s)^{N_s-1}},
\end{equation}
where $N_s$ is the number of neighbors of node $s$ in the factor graph.
In this case the variational free energy named Bethe free energy is
\begin{align}
  &F_B(q_B(\bm{x})) = \sum_{(s,t)\in\Ee}\sum_{x_s, x_t} q_{st}(x_s, x_t) \ln\frac{q_{st}(x_s,x_t)}{\phi_{st}(x_s, x_t)} \nonumber \\
  &+ \sum_{s\in\Vv}\sum_{x_s}q_s(x_s) \ln{\frac{q_s(x_s)}{\phi_s(x_s)}} - \sum_{s\in\Vv}N_s\sum_{x_s} q_s(x_s) \ln{{q_s(x_s)}}.
\end{align}

The minimization of $F_B$, with marginalization constraints $\sum_{x_s}q_{st}(x_s, x_t) = q_t(x_t)$, $\forall~ (s,t)\in \Ee $, recovers the message update rule of standard BP as in \eqref{eq:bp-update-rule} in our paper, i.e.
\begin{equation}\label{apdx:eq:std-bp}
  {m}^{\text{new}}_{ts}(x_s) \propto 
  \sum_{x_t} \phi_{st}(x_s, x_t) {\phi}_t(x_t) \prod_{w\in \Nn(t)\backslash s}m_{wt}(x_t).
\end{equation}

As can be seen by comparing \eqref{eq:message-rule} in our paper with \eqref{apdx:eq:std-bp} here, the message passing rule of standard BP can be recovered from that of $\alpha$-BP by setting all edges' $\alpha$ value to be $1$.

The damping technique is to update the message by a soft combination instead of directly applying the update message to be the new message. The damping technique is usually applied to standard BP in order to help the convergence. To clearly state the difference between message update rule of BP with damping (damped BP) and that of $\alpha$-BP, we give the message update of damped BP
\begin{equation}
  {m}^{\text{new}}_{ts}(x_s) \propto {m}_{ts}(x_s)^{\gamma}\bigg[
  \sum_{x_t} \phi_{st}(x_s, x_t) {\phi}_t(x_t)\!\!\!\!\!\! \prod_{w\in \Nn(t)\backslash s}\!\!\!\!\!\!m_{wt}(x_t) \bigg]^{1-\gamma}
\end{equation}
that does soft update in product way, where $0< \gamma < 1$.

\subsection{Tree-reweighted BP}
The tree-reweighted BP shares some similarity with $\alpha$-BP in formula of the message-passing rule, namely the pairwise log-potential functions are scaled by a weight and reweighted old messages appear in computation of new messages. But different from $\alpha$-BP, tree-reweighted BP is derived by obtaining an upper bound of log-partition function of $p(\bm{x})$ first via a Jensen's inequality and minimize the upper bound \cite{wainwright2008graphical, wainwright2002tree}. The upper bound is
\begin{align}
  & F_T = \sum_{s\in\Vv}\sum_{x_s}q_s(x_s) \ln{\frac{q_s(x_s)}{\phi_s(x_s)}} \nonumber \\
  & + \sum_{(s,t)\in\Ee}\mu_{st}\sum_{x_s, x_t} q_{st}(x_s, x_t) \ln\frac{q_{st}(x_s,x_t)}{q_s(x_s)q_t(x_t)} \nonumber \\
  & - \sum_{(s,t)\in\Ee}\sum_{x_s, x_t} q_{st}(x_s, x_t)\ln{\phi_{st}(x_t, x_t)},
\end{align}
where $0\leq \mu_{st} \leq 1$ is defined as the appearance probability of edge $(s, t) \in \Ee$, which denotes the appearance rate of edge $(s,t)$ among all spanning trees of graph $\Gg$. Denotes set of all spanning trees of $\Gg$ by $\Tt(\Gg)$. $\mu_{st}$ is the probability that edge $(s,t)$ exists in a randomly selected spanning tree from $\Tt(\Gg)$. The appearance rate can be expensive to compute as it is defined on all spanning trees of a graph.

The upper bound $F_T$ can be reduced into $F_B$ when $\mu_{st}=1$, $\forall ~(s,t)\in \Ee$. The message-passing updates of the tree-reweighted algorithm corresponds to the minimization of $F_T$ with marginalization constraints, which can be written as
\begin{align}
    &{m}^{\text{new}}_{ts}(x_s) \propto  \nonumber \\
  &\sum_{x_t} \phi_{st}(x_s, x_t)^{1/\mu_{st}} {\phi}_t(x_t) \frac{\prod_{w\in \Nn(t)\backslash s}m_{wt}(x_t)^{\mu_{wt}} }{m_{st}(x_t)^{1-\mu_{st}}}.
\end{align}
In the message update rule, both pairwise potential factor and old messages are reweighted, which are different from the way of how pairwise potential factor and old message are reweighted in message update in \eqref{eq:message-rule} of $\alpha$-BP in our paper. Nevertheless, $\alpha$-BP is derived in the way that is different from tree-reweighted BP.

\section{More experimental results}
\begin{figure}[!t]
  \subfigure[More comparison with damped BP]{\label{fig:compare-vs-damp}
    \includegraphics[width=0.45\columnwidth]{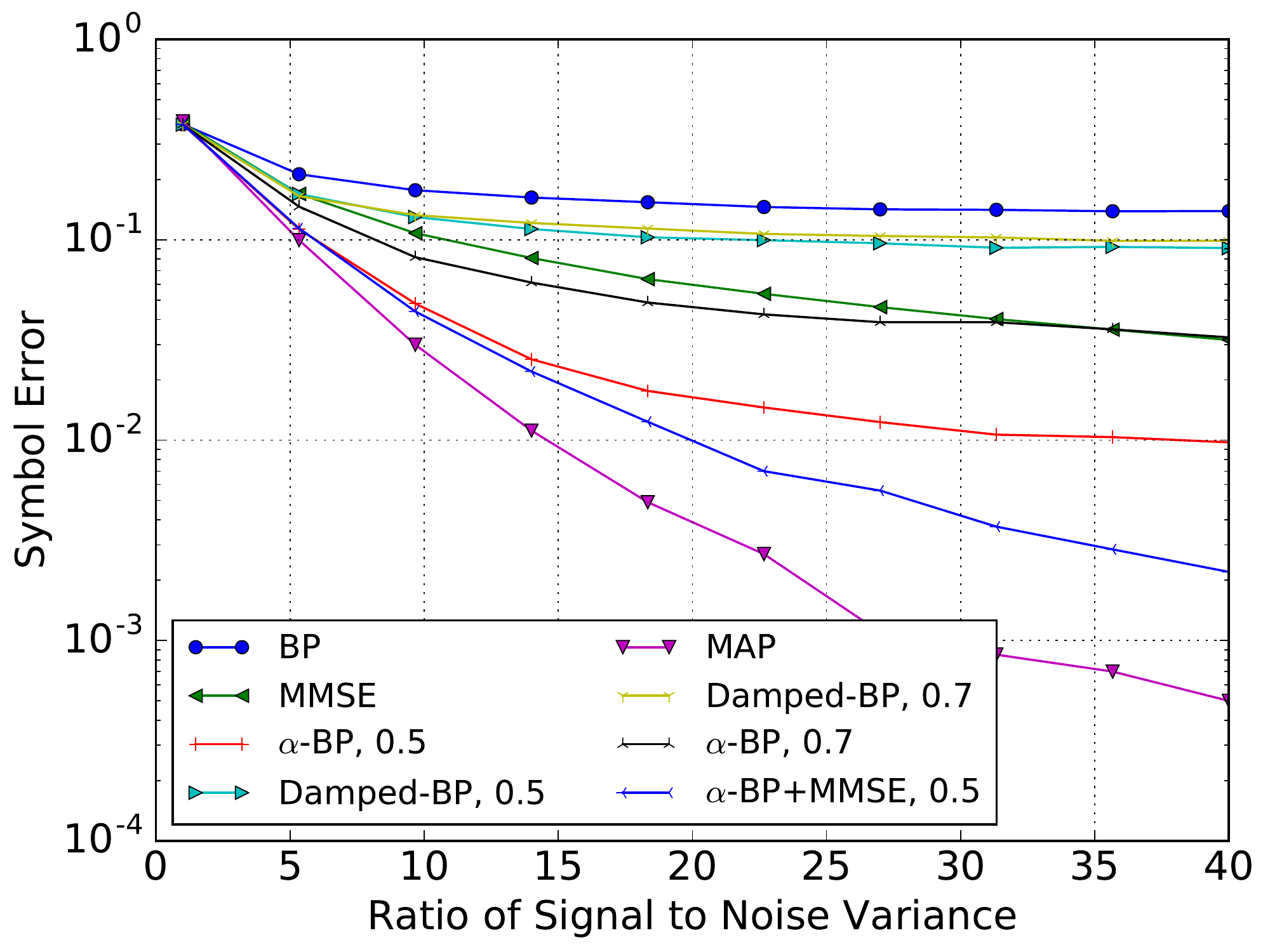}}
  \subfigure[Anneal the alpha value in message iterations]{\label{fig:anneal}
    \includegraphics[width=0.45\columnwidth]{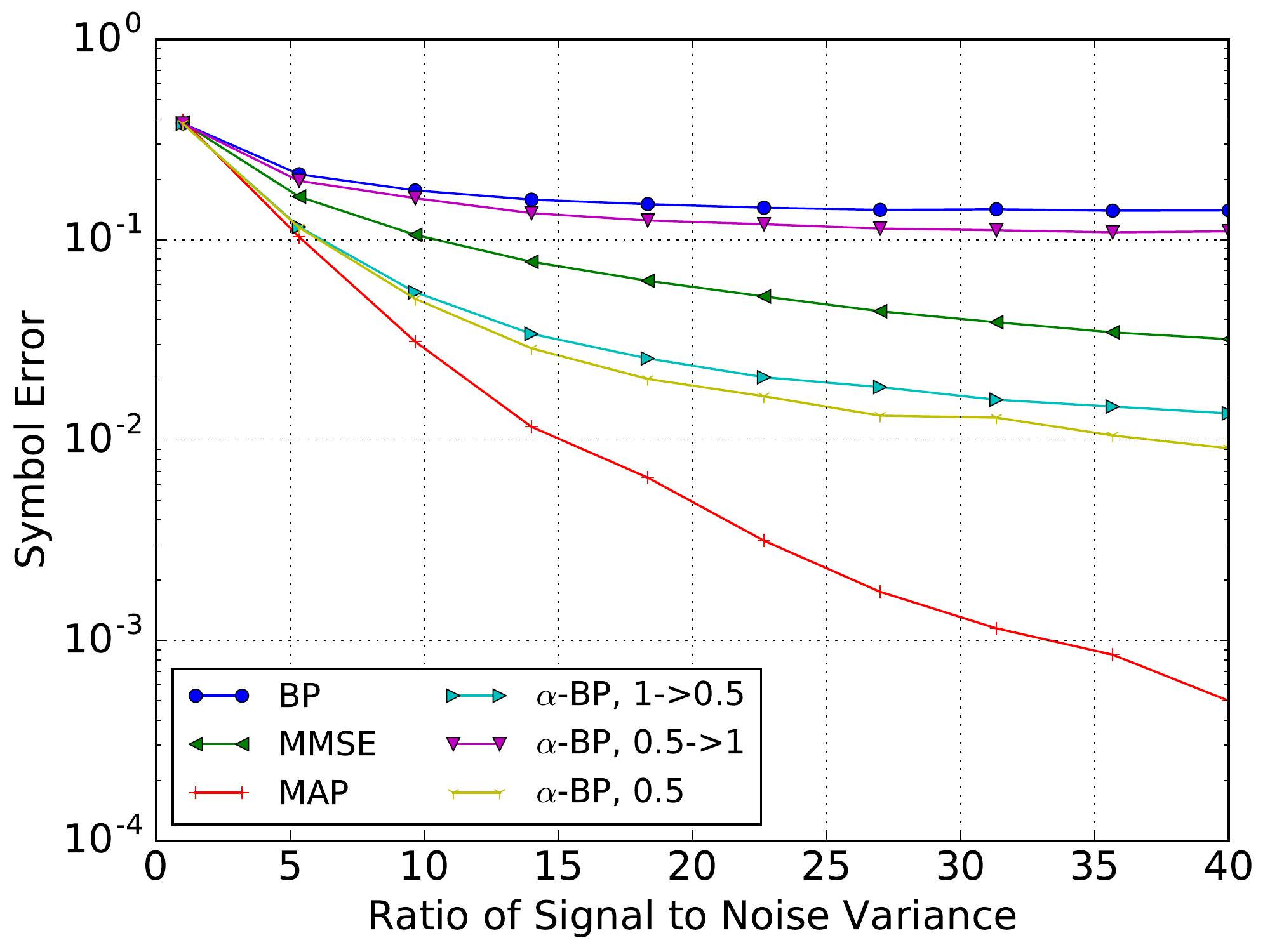}}
  \caption{Some more experimental results}
  \label{fig:extra_mimo_detection}
\end{figure}
More controlled experiments are carried out as shown in Figure~\ref{fig:extra_mimo_detection}. In Figure~\ref{fig:compare-vs-damp}, comparison with damped BP with $\gamma$ setting as $0.5$ and $0.7$ is made. It can be seen that performance of damped BP is better than BP slightly.
 In Figure~\ref{fig:anneal}, we shows the results of annealing $\alpha$ value of $\alpha$-BP as messages are updated. Two alternatives are carried out with $\alpha$ changing from $1$ to $0.5$ (with legend $1\rightarrow 0.5$) and from $0.5\rightarrow 1$ (with legend $0.5 \rightarrow 1$) in a gradual way as messages are iterated. The comparison shows that the values of $\alpha$ make a larger difference at ending iterations than that at beginning iterations.

\end{document}